\documentclass{article}
\usepackage[preprint]{neurips_2026}
\usepackage[T1]{fontenc}

\setcitestyle{authoryear,round,citesep={;},aysep={,},yysep={;}}

\usepackage{amsmath,amsfonts,bm}

\def\eqref#1{equation~\ref{#1}}

\def\1{\bm{1}}

\DeclareMathAlphabet{\mathsfit}{\encodingdefault}{\sfdefault}{m}{sl}
\SetMathAlphabet{\mathsfit}{bold}{\encodingdefault}{\sfdefault}{bx}{n}

\newcommand{\KL}{D_{\mathrm{KL}}}

\usepackage{hyperref}
\usepackage{url}
\usepackage{booktabs}
\usepackage{amsmath,amssymb,amsthm}
\usepackage{graphicx}
\usepackage{xcolor}
\usepackage{multirow}
\usepackage{colortbl}
\usepackage{tabularx}
\usepackage{algorithm}
\usepackage{algorithmic}
\usepackage{subcaption}
\usepackage{capt-of}
\usepackage{float}
\usepackage{xspace}
\usepackage{pifont}

\newtheorem{theorem}{Theorem}

\theoremstyle{definition}

\theoremstyle{remark}

\theoremstyle{plain}

\graphicspath{{../}}

\definecolor{bestrow}{HTML}{E8F4FD}
\definecolor{deltarow}{HTML}{F5F5F8}

\newcommand{\method}{Visual-OPSD\xspace}
\newcommand{\methodfull}{Visual On-Policy Self-Distillation\xspace}
\newcommand{\methodnoise}{Visual-OPSD-Noise\xspace}

\title{\protect\raisebox{-0.22\height}{\protect\includegraphics[height=2.0em]{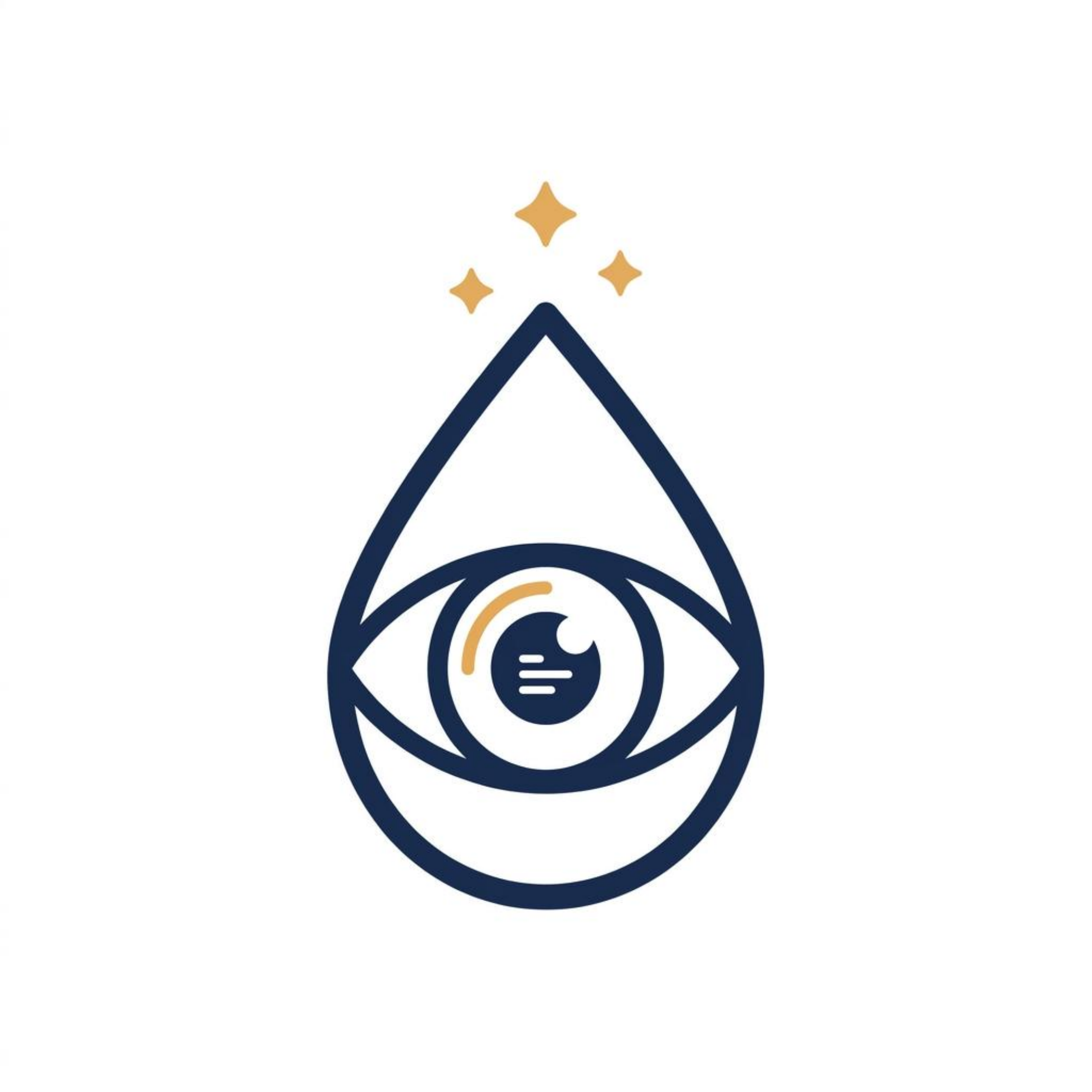}}\hspace{0.2em}Visual-OPSD: Cross-Modal On-Policy Self-Distillation \\ for Efficient Unified Multimodal Reasoning}

\author{
  Pengyu Li$^{1,3\dagger}$,\; Zhitao Gao$^{1,3}$,\; Lingling Zhang$^{1,2*}$,\; Muye Huang$^{1,3}$\\
  \textbf{Yuanming Li$^{4}$},\; \textbf{Zesheng Yang$^{1,3}$},\; \textbf{Fangzhi Xu$^{1,3}$},\; \textbf{Jun Liu$^{1,2}$}
}

\begin{document}

\maketitle
\renewcommand{\thefootnote}{\arabic{footnote}}
\footnotetext[1]{Xi'an Jiaotong University \hspace{1em} $^{2}$MOE KLINNS Lab, Xi'an Jiaotong University \hspace{1em} $^{3}$Shaanxi Province Key Laboratory of Big Data Knowledge Engineering \hspace{1em} $^{4}$Sun Yat-sen University.}
\renewcommand{\thefootnote}{*}
\footnotetext{Corresponding author. \hspace{1em} $\dagger$~\texttt{lipengyu.tiez@stu.xjtu.edu.cn}}
\renewcommand{\thefootnote}{$\diamondsuit$}
\footnotetext{Code is available at \url{https://github.com/TiezMind/Visual-OPSD}.}
\renewcommand{\thefootnote}{\arabic{footnote}}
\setcounter{footnote}{0}

\vspace{-2em}
\begin{figure}[H]
    \centering
    \vspace{-0.5em}
    \includegraphics[width=\textwidth]{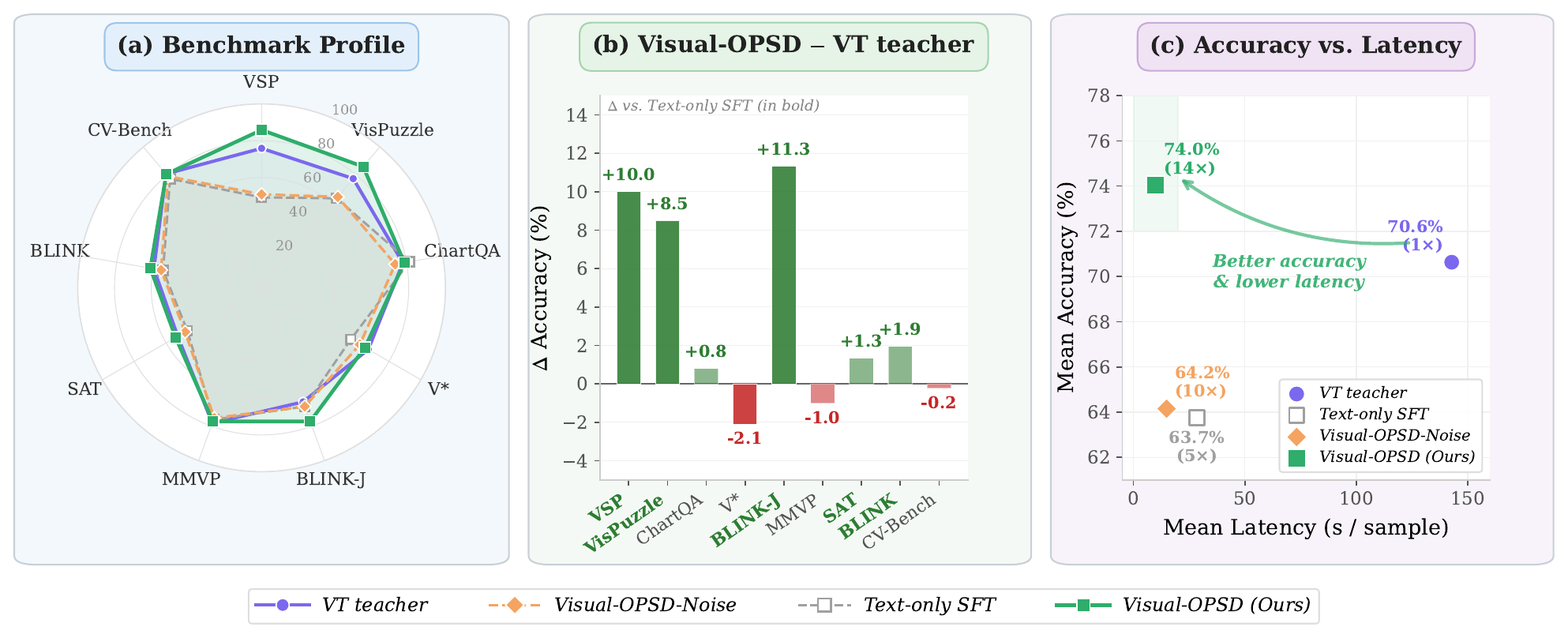}
    \vspace{-1em}
    \caption{%
        \textbf{\method matches its VT-generating teacher at $14\times$ lower latency.}
        \textbf{(a)}~Radar over 9 tasks: \method (green) $\geq$ teacher (purple) on 6/9.
        \textbf{(b)}~Largest per-task gains: VSP $+10.0$, VisPuzzle $+8.5$, BLINK-J $+11.3$.
        \textbf{(c)}~Accuracy--latency Pareto: \method 74.0\%/10.0s vs.\ teacher 142.8s.
    }
    \label{fig:main_results}
    \vspace{-1em}
\end{figure}
\vspace{-0.5em}

\begin{abstract}
    Unified multimodal models (UMMs) interleave generated ``visual thoughts'' (VTs) with text reasoning to improve spatial tasks, at roughly an order-of-magnitude inference cost from multi-step diffusion. We find that this cost buys little at inference: on ThinkMorph, removing or noising the intermediate VTs leaves accuracy essentially unchanged across nine benchmarks, and once a VT is rendered the subsequent reasoning attends to it regardless of what it contains. A KL diagnostic nonetheless shows that conditioning on a privileged VT trace shifts the model's completion distribution, which places the useful reasoning in the generation pathway rather than in the rendered pixels. We exploit that gap with \methodfull (\method). Teacher and student share identical weights and differ only in context: the teacher attends to privileged VTs, the student only to the problem image and question. Token-level JSD distillation along on-policy student trajectories moves the teacher's reasoning into a text-only student. Across nine benchmarks, \method improves on its generative teacher by $+3.40$pp while running $14.3\times$ faster (10.0s vs.\ 142.8s per sample), and exceeds same-scale VLMs by $+63.83$pp on VSP. Replacing real VTs with Gaussian noise recovers only $+0.40$pp of the $+10.28$pp gain, and distillation closes $58.4\%$ of the measured KL gap on held-out samples.
\end{abstract}

\begin{figure*}[t]
    \centering
    \includegraphics[width=\textwidth]{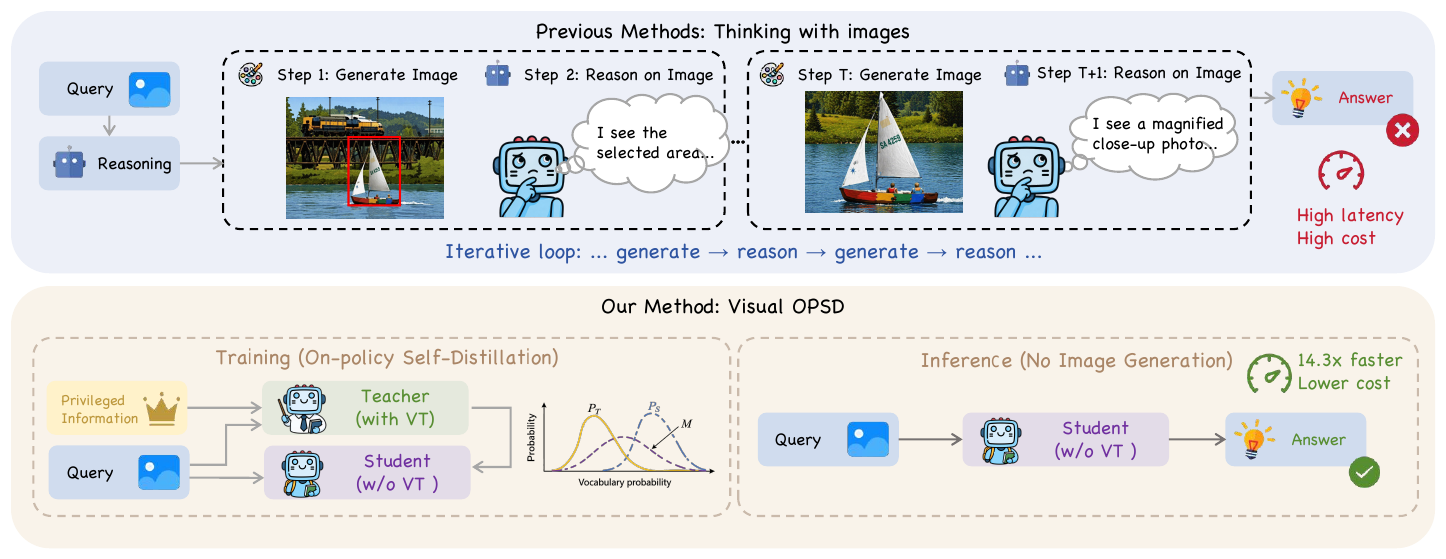}
    \caption{%
        \textbf{Prior interleaved visual CoT vs.\ \method.}
        \textbf{(Top)}~Previous methods iterate \emph{generate-then-reason}, rendering each visual thought (VT) via 50-step diffusion at high latency and cost.
        \textbf{(Bottom)}~\method distills the generation pathway into a text-only student via cross-modal on-policy self-distillation, yielding $+3.40$pp accuracy at $14.3\times$ speedup with no image generation at inference.
    }
    \label{fig:compare}
\end{figure*}

\section{Introduction}
\label{sec:intro}

Unified multimodal models (UMMs)~\citep{deng2025emerging,gu2026thinkmorph,team2024chameleon,wang2024emu3} handle visual understanding and generation within a single set of weights, and they turn out to support interleaved visual chain-of-thought reasoning: the model alternates text segments with intermediate ``visual thoughts'' (VTs) generated by multi-step diffusion before committing to an answer. ThinkMorph~\citep{gu2026thinkmorph} reports consistent spatial-reasoning gains from this protocol over text-only reasoning, which it attributes to richer intermediate representations.

The protocol is expensive. Each VT costs 50 diffusion denoising steps, putting per-sample latency roughly an order of magnitude above text-only reasoning. The cost raises a question prior to efficiency: do the rendered VT pixels carry load-bearing information at all, or do the trajectory-level gains come from somewhere else?

We investigate this with a controlled pilot study on ThinkMorph (Figure~\ref{fig:pilot}) that intervenes on intermediate VTs at inference time without retraining. Removing the VTs entirely and letting the model reason in text alone preserves most of the accuracy across nine benchmarks ($69.90$ vs.\ $70.63$ on average), and on BLINK-J and MMVP text-only reasoning is slightly ahead of the full interleaved setting; replacing real VTs with Gaussian noise behaves much the same. A per-layer attention analysis on V* (Figure~\ref{fig:attention-layer}) adds that the text reasoning following a VT attends almost exclusively to that VT and largely ignores the original input, whatever the VT depicts. The rendered pixels therefore contribute little beyond what text-only reasoning already recovers, and the diffusion cost is not commensurate with their direct benefit. What generation training does buy shows up elsewhere: even in text-only mode ThinkMorph averages $69.90$, well above same-scale dedicated VLMs that were never trained with a generation pathway (InternVL3.5-8B $58.04$, Qwen3-VL-8B $64.58$). The value of that pathway thus sits somewhere other than in the pixels it renders.

Our hypothesis is that it sits in the internal representations that generation training shaped. To test it we measure the KL divergence between the model's completion distributions with and without a privileged VT trace. The divergence is large in every task category (Section~\ref{sec:kl-diag}), so a distributional gap is available for distillation to exploit even though the VTs themselves are not load-bearing at inference.

Building on these observations, we propose \methodfull (\method), a cross-modal on-policy self-distillation framework that exploits the above distributional gap within a single set of weights. Teacher and student share identical parameters and differ only in conditioning context: the teacher attends to privileged VT images, the student only to the problem image and question. Token-level JSD distillation along on-policy student trajectories transfers the generation pathway's distributional knowledge into the student. At inference the student runs in text-only mode, with no diffusion steps, architectural changes, or additional parameters.

Switching ThinkMorph to text-only inference already removes the diffusion latency at no cost, so \method has to justify itself against that option and not only against the interleaved teacher. It does: the distilled student averages $74.03$ against $69.90$ for zero-shot text-only inference on the same weights, a gain of $+4.13$pp.

Our contributions are as follows:

\begin{itemize}
    \item \textbf{Finding.} Controlled interventions on ThinkMorph show that rendered VT pixels are not load-bearing at inference, while the generation pathway still carries a large distributional signal that a KL diagnostic can measure. Visual generation training and inference utility come apart in UMMs, which to our knowledge has not been examined.
    \item \textbf{Method.} We propose \method, a cross-modal on-policy self-distillation framework that transfers this distributional knowledge from a VT-conditioned teacher to a text-only student within a single model. To our knowledge, this is the first OPSD instance bridging the asymmetry between generation and understanding in a unified multimodal model.
    \item \textbf{Results.} Across nine benchmarks, \method preserves or improves accuracy on 6 of 9 tasks ($+3.40$pp on average) while cutting per-sample inference time by $14.3\times$. A noise control gains only $+0.40$pp over text-only fine-tuning, against $+10.28$pp for \method, which places the source of the gain in the semantic content of the generation pathway rather than in regularization. We release the training code, evaluation scripts, and distilled checkpoints.
\end{itemize}

\begin{figure*}[t]
    \centering
    \begin{subfigure}[t]{0.51\textwidth}
        \centering
        \includegraphics[width=\textwidth]{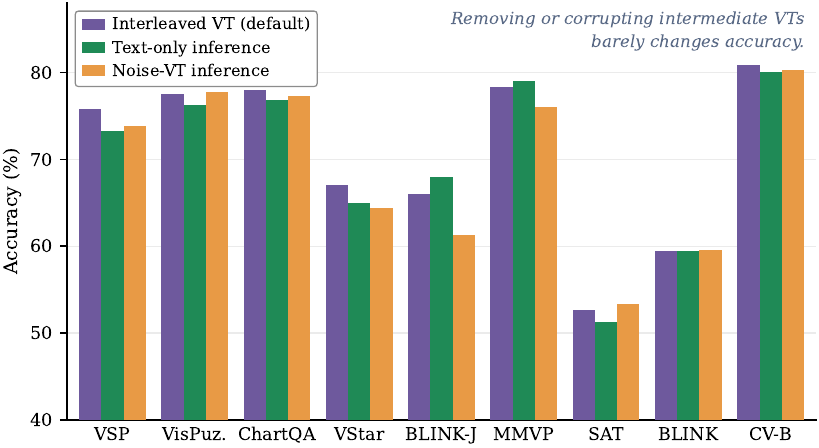}
        \caption{Inference-time VT intervention on ThinkMorph.}
        \label{fig:pilot}
    \end{subfigure}
    \hfill
    \begin{subfigure}[t]{0.46\textwidth}
        \centering
        \includegraphics[width=\textwidth]{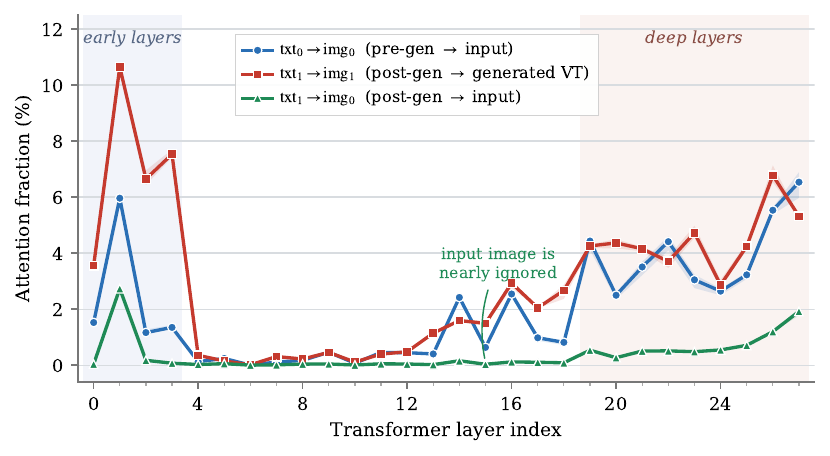}
        \caption{Per-layer cross-modal attention on V*.}
        \label{fig:attention-layer}
    \end{subfigure}
    \caption{%
        \textbf{Two diagnostics on ThinkMorph that motivate \method.}
        \textbf{(a)}~Removing or corrupting intermediate VTs at inference leaves accuracy largely unchanged across all nine benchmarks.
        \textbf{(b)}~Once generated, a VT dominates the subsequent reasoning attention regardless of its content.
    }
    \label{fig:pilot-attention}
\end{figure*}

\section{Method}
\label{sec:method}

\subsection{Preliminaries}
\label{sec:prelim}

\paragraph{Unified multimodal model.} \method applies to any UMM that supports both visual understanding and generation. We instantiate it on ThinkMorph~\citep{gu2026thinkmorph}, built on the BAGEL architecture~\citep{deng2025emerging}, which fuses a Qwen2.5 LLM backbone with MoT decoder layers, a SigLIP-so400m NaViT vision encoder, and a FLUX VAE for latent image encoding and generation, so that one set of weights covers both image$\to$text understanding and text$\to$image generation.

\paragraph{Interleaved chain-of-thought protocol.} The model performs reasoning via an interleaved protocol in which text and generated images alternate. Writing $x$ for the problem input (image and question), the thought sequence is $\mathcal{T} = (\hat{m}_1, \ldots, \hat{m}_n)$ with $\hat{m}_i \sim \mathcal{P}_\theta(\cdot \mid x, \hat{m}_{<i})$, where each $\hat{m}_i \in \{\hat{t}_i, \hat{v}_i\}$ is either a text thought or a visual thought. Special tokens are omitted from this notation, but modality transitions are controlled in practice by delimiters: image thoughts are bracketed by \texttt{<image\_start>} and \texttt{<image\_end>}, which lets the model switch between textual and visual reasoning within a single sequence. Section~\ref{sec:vpd} uses the same pair to suppress visual thoughts during sampling.

\paragraph{Generation cost.} Each VT costs 50 denoising steps through the diffusion pathway, roughly a $14\times$ latency overhead: 142.8s per sample with VT generation against 10.0s without. The goal is to keep the knowledge that generation training produced while dropping the generation step at inference.

\subsection{Measuring Generation Knowledge: KL Diagnostic}
\label{sec:kl-diag}

We first formalize and measure the ``generation knowledge'' hypothesis. Consider the frozen unified model $M$ with parameters $\theta$. We construct two forward passes that share \emph{identical completion tokens} $\mathbf{c} = (c_1, \ldots, c_T)$ but differ in their conditioning context:
\begin{align}
    \mathcal{C}_T &= [\text{sys},\; \text{ViT}(\text{img}),\; \text{question},\; \text{ref\_intro},\; (\text{ViT}(\text{VT}_i))^{+},\; \text{transition}] \label{eq:ctx-teacher}\\
    \mathcal{C}_S &= [\text{sys},\; \text{ViT}(\text{img}),\; \text{question}] \label{eq:ctx-student}
\end{align}
The teacher context $\mathcal{C}_T$ prepends a \emph{strictly visual-only} privileged reasoning trace before the completion. Here, $\text{ref\_intro}$ is a preamble that frames the subsequent images as privileged visual references, and $\text{transition}$ instructs the model to now reason independently from this privileged context (full prompts in Appendix~\ref{app:prompts}). The student context $\mathcal{C}_S$ contains only the problem image and question. Both paths autoregressively process the \emph{same} completion tokens $\mathbf{c}$ yet produce different next-token distributions at each position, $p_\theta(c_t \mid c_{<t}, \mathcal{C}_T) \neq p_\theta(c_t \mid c_{<t}, \mathcal{C}_S)$. We define the generation knowledge for a sample as the average per-token KL divergence over the shared completion span:
\begin{equation}
\label{eq:gen-knowledge}
    \mathcal{K}_{\text{gen}} \triangleq \frac{1}{T}\sum_{t=1}^{T} \KL\!\Big(p_\theta(\cdot \mid c_{<t}, \mathcal{C}_T) \;\Big\|\; p_\theta(\cdot \mid c_{<t}, \mathcal{C}_S)\Big)
\end{equation}
This quantity measures how much the VT reasoning trace shifts the model's next-token predictions on a fixed completion, and serves as a proxy for the size of the distributional gap that \method training seeks to close. The noise-control variant (Section~\ref{sec:vt-quality}) and the post-distillation gap-closing analysis (Appendix~\ref{app:kl-closing}) further attribute this gap to the semantic content of the VT reasoning trace.

We evaluate the diagnostic on a held-out set of 1,000 examples (250 per category) spanning the four task categories, disjoint from the data used for training:

\begin{table}[h]
\caption{KL diagnostic on shared completion tokens, measured on 1,000 held-out examples. Large $\mathcal{K}_{\text{gen}}$ indicates substantial VT-encoded knowledge in the completion distribution, which is the learning signal available to \method.}
\label{tab:kl-diag}
\begin{center}
\small
\begin{tabular}{lcc}
\toprule
\textbf{Task Category} & \textbf{Samples} & $\boldsymbol{\mathcal{K}_{\text{gen}}}$ \textbf{(nats/token)} \\
\midrule
Visual Search & 250 & 4.23 \\
Spatial Navigation & 250 & 3.96 \\
Chart Refocus & 250 & 3.51 \\
Jigsaw Assembly & 250 & 6.84 \\
\midrule
\textbf{Overall} & \textbf{1,000} & \textbf{4.64} \\
\bottomrule
\end{tabular}
\end{center}
\end{table}

All categories exhibit $\mathcal{K}_{\text{gen}} \gg 0$ (overall $4.64$ nats/token), indicating that the VT reasoning trace systematically shifts the model's completion predictions. Jigsaw Assembly shows the largest gap ($6.84$ nats/token), consistent with spatial-manipulation tasks deriving the most benefit from intermediate visual reasoning.

A per-token analysis (Figure~\ref{fig:kl-heatmap} in Appendix~\ref{app:kl-heatmap}) shows the divergence to be \emph{non-uniform}: it concentrates on tokens encoding spatial relations, quantities, and visually grounded answers, while function words carry near-zero divergence. VT reasoning therefore informs the predictions that decide task success rather than shifting overall style. Appendix~\ref{app:kl-closing} shows that \method training closes $58.4\%$ of this gap on the same held-out set.

\subsection{\method: Cross-Modal On-Policy Self-Distillation}
\label{sec:vpd}

\paragraph{Cross-modal information gap.} \method distills generation knowledge by exploiting the cross-modal information gap between teacher and student contexts within the same model. The two sequences are $\mathcal{C}_T$ and $\mathcal{C}_S$ from Eqs.~\ref{eq:ctx-teacher}--\ref{eq:ctx-student}, each followed by the same completion span, and loss is active on that span alone. The privileged channel is \emph{strictly visual-only}: the teacher's extra context holds the intermediate VT images (encoded via ViT) and nothing else. Since the completion tokens are identical on both sides, the teacher's logits differ only because its KV cache encodes those images, and that difference is the generation pathway's distillable knowledge. Figure~\ref{fig:pipeline} shows the training loop; Algorithm~\ref{alg:opsd-vpd} in Appendix~\ref{app:impl} gives the pseudocode.

\begin{figure*}[t]
    \centering
    \includegraphics[width=\textwidth]{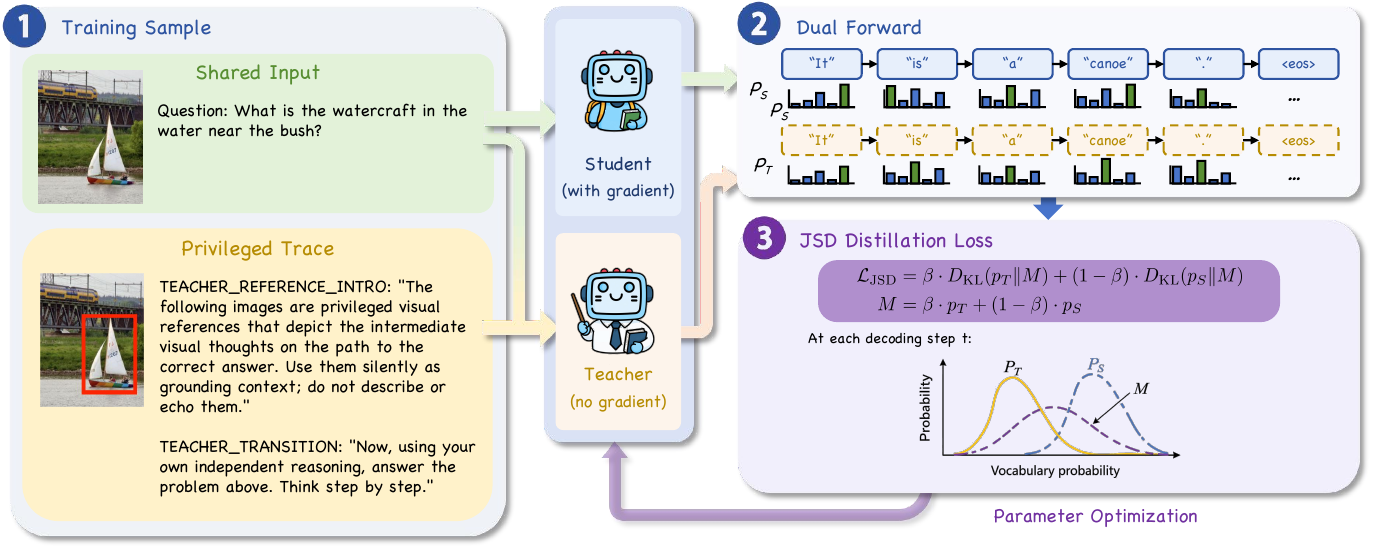}
    \caption{%
        \textbf{Overview of \method.} From the same UMM, a \textcolor{green!50!black}{student} $\pi_\theta(\cdot\!\mid\!\mathcal{C}_S)$ (gradients on) sees only $[\text{sys},\text{ViT}(x),q]$, while an EMA \textcolor{orange!80!black}{teacher} $\pi_{\bar\theta}(\cdot\!\mid\!\mathcal{C}_T)$ (no gradient) additionally receives privileged visual thoughts $(\text{ViT}(\hat v_i))^+$. The student samples $\hat c\!\sim\!\pi_\theta$ on-policy; both policies rescore the shared completion to yield $p_S^{(t)},p_T^{(t)}$, optimized by per-token JSD. At inference, the student runs text-only with no VT generation, $14.3\!\times$ faster, and $+3.40$pp over the generative teacher.
    }
    \label{fig:pipeline}
\end{figure*}

\paragraph{On-policy sampling.} At each step the student generates a completion from its current policy rather than reading ground-truth text. When the student emits \texttt{<image\_start>}, which would enter generation mode, we immediately append the closing \texttt{<image\_end>} to the running context and resume text sampling, so the image span is emptied and the diffusion pathway is never invoked; this keeps sampling on-policy while preventing collapse into generation mode. The injected \texttt{<image\_end>} steers sampling only and never enters the completion $\mathbf{c}$ that the two policies later rescore. No divergence term is therefore computed at that position, and the teacher is never asked to score a position where its context leads it to expect latent visual tokens.

\paragraph{Training objective.} Given an on-policy completion $\mathbf{c} = (c_1, \ldots, c_T)$ sampled from the student, both teacher and student evaluate this shared sequence and produce next-token distributions $p_T^{(t)} \triangleq p_{\bar\theta}(\cdot \mid c_{<t}, \mathcal{C}_T)$ and $p_S^{(t)} \triangleq p_\theta(\cdot \mid c_{<t}, \mathcal{C}_S)$ at each position $t$. We minimize the trajectory-averaged, token-wise generalized Jensen--Shannon divergence:
\begin{equation}
\label{eq:vpd-loss}
\begin{aligned}
    \mathcal{L}(\theta) &= \frac{1}{T}\sum_{t=1}^{T} \text{JSD}_\beta\!\left(p_T^{(t)} \,\big\|\, p_S^{(t)}\right), \\
    \text{JSD}_\beta(p_T \| p_S) &= \beta \KL(p_T \| m) + (1\!-\!\beta) \KL(p_S \| m), \quad m = \beta p_T + (1\!-\!\beta) p_S,
\end{aligned}
\end{equation}
where $\beta = 0.5$ gives symmetric JSD (our default) and gradients flow only through $p_S$. Standard distillation interpolates soft targets with hard CE labels; here the teacher--student gap is one of \emph{information} rather than capacity, so we use pure JSD and no CE term that would anchor the student to fixed ground-truth completions.

\paragraph{Top-$K$ support.} For memory we restrict the divergence at each position to the teacher's top-$K$ token indices $\mathcal{V}_K^{(t)}$ ($K = 256$), cutting the vocabulary dimension from 152K to 256 with negligible effect on distillation quality (Appendix~\ref{app:jsd-ablation}). Both arguments are renormalized over that support, $\tilde p^{(t)}(v) = p^{(t)}(v) / \sum_{u \in \mathcal{V}_K^{(t)}} p^{(t)}(u)$, so each is a proper distribution on $\mathcal{V}_K^{(t)}$ and the divergence is well defined. A position whose sampled token $c_t$ falls outside $\mathcal{V}_K^{(t)}$ is not skipped: the objective matches distributions rather than scoring the realized token, so such a position contributes like any other, with the off-support mass of $p_S^{(t)}$ removed by renormalization.

\paragraph{Pointwise clipping.} Raw per-position divergence is heavily skewed, and a few extreme disagreements can dominate the gradient. We clip \emph{pointwise in the vocabulary} rather than per token, giving the implemented objective
\begin{equation}
\label{eq:vpd-loss-impl}
\begin{aligned}
    \mathcal{L}(\theta) &= \frac{1}{T}\sum_{t=1}^{T} \sum_{v \in \mathcal{V}_K^{(t)}} \min\!\big(d_t(v),\; \tau\big), \qquad \tau = 0.05, \\
    d_t(v) &= \beta\, \tilde p_T^{(t)}(v) \log\frac{\tilde p_T^{(t)}(v)}{m_t(v)} + (1\!-\!\beta)\, \tilde p_S^{(t)}(v) \log\frac{\tilde p_S^{(t)}(v)}{m_t(v)} \;\geq\; 0,
\end{aligned}
\end{equation}
with $m_t = \beta \tilde p_T^{(t)} + (1\!-\!\beta) \tilde p_S^{(t)}$. At $\beta = 0.5$ a single entry contributes at most $\tfrac{1}{2}\log 2 \approx 0.347$ nats, so $\tau$ sits at $14\%$ of the per-entry range and the clamp binds only where the two policies disagree sharply on one alternative. A position whose divergence is spread across many entries, which Figure~\ref{fig:kl-heatmap} associates with content tokens, keeps its full token-level signal and its total may exceed $\tau$ by orders of magnitude, because the clamp applies entry by entry. Eq.~\ref{eq:vpd-loss-impl} is a clipped surrogate rather than a divergence; it agrees with Eq.~\ref{eq:vpd-loss} exactly on positions where every entry falls below $\tau$. Removing the clamp costs $3.2$pp of average accuracy on the three spatial benchmarks of Appendix~\ref{app:jsd-ablation}.


\subsection{A Distributional View of Teacher--Student Transfer}
\label{sec:theory}

\method exceeds its generative teacher by $+3.40$pp while using less information at inference (Table~\ref{tab:main}). A VT-conditioned prediction carries both useful reasoning and whatever errors the generated image introduced, and Figure~\ref{fig:attention-layer} shows that later reasoning concentrates on that image, which can amplify the errors. Ensemble distillation gives a distributional account of transferring teacher information without conditioning inference on any single generated VT.

Fix the input $(X,Q)$, completion prefix $c_{<t}$, and teacher weights $\bar\theta$. For $V \sim p_{\text{gen}}(\cdot \mid X,Q)$, write $p_V(v)=p_{\bar\theta}(v\mid c_{<t},\mathcal{C}_T(V))$ for the teacher's probability of vocabulary entry $v$, and define the arithmetic-mean teacher $\bar p(v)=\mathbb{E}_V[p_V(v)]$. Consider strictly positive teacher distributions on a finite vocabulary and a student distribution $q$ that is freely optimized on the probability simplex.

\begin{theorem}[Ensemble distillation targets]
\label{thm:signal-noise}
For every strictly positive student distribution $q$,
\begin{equation}
\label{eq:kl-ensemble}
\mathbb{E}_V[\KL(p_V\|q)]
=\mathbb{E}_V[\KL(p_V\|\bar p)]+\KL(\bar p\|q).
\end{equation}
Thus the expected forward-KL objective has the unique minimizer $q^\star=\bar p$. For symmetric JSD,
\begin{equation}
\label{eq:jsd-noise}
\mathbb{E}_V[\mathrm{JSD}_{1/2}(p_V\|q)]
\geq \mathrm{JSD}_{1/2}(\bar p\|q).
\end{equation}
The JSD inequality gives a lower bound, not an identification of its minimizer. The proof is given in Appendix~\ref{app:proofs}.
\end{theorem}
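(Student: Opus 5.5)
The plan is to prove the decomposition in Eq.~\ref{eq:kl-ensemble} by a direct log-ratio expansion, read off the minimizer from nonnegativity of KL, and obtain Eq.~\ref{eq:jsd-noise} from joint convexity of JSD with Jensen's inequality. Throughout, the vocabulary is finite and all $p_V$ and $q$ are strictly positive, so every logarithm is finite. Hence $\bar p$ is also strictly positive, and all expectations are finite sums or integrals of bounded quantities. If $V$ is not finitely supported, one needs $\mathbb{E}_V[\KL(p_V\|q)]<\infty$, which holds when $\log p_V$ is bounded below uniformly or under mild integrability. I would state this as a standing assumption, since the paper treats $p_{\text{gen}}$ abstractly.

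\textbf{Step 1 (decomposition).} For each $V$, insert $\bar p$ inside the logarithm:
\[
\KL(p_V\|q)=\sum_v p_V(v)\log\frac{p_V(v)}{\bar p(v)}+\sum_v p_V(v)\log\frac{\bar p(v)}{q(v)}.
\]
Take $\mathbb{E}_V$ and use linearity, exchanging the expectation with the finite sum over $v$. The second term becomes $\sum_v \bar p(v)\log(\bar p(v)/q(v))=\KL(\bar p\|q)$, because the factor $\log(\bar p(v)/q(v))$ does not depend on $V$. This gives Eq.~\ref{eq:kl-ensemble}.

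\textbf{Step 2 (unique minimizer).} The first term on the right of Eq.~\ref{eq:kl-ensemble} does not depend on $q$. The second term satisfies $\KL(\bar p\|q)\ge 0$, with equality iff $q=\bar p$ by Gibbs' inequality. So $q^\star=\bar p$ is the unique minimizer over strictly positive $q$.

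For $q$ on the boundary of the simplex, the objective is $+\infty$ because $\bar p$ has full support. Hence $\bar p$ remains the unique minimizer over the whole simplex.

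\textbf{Step 3 (JSD bound).} The map $(p,q)\mapsto \mathrm{JSD}_{1/2}(p\|q)$ is jointly convex. It equals $H(\tfrac{p+q}{2})-\tfrac12 H(p)-\tfrac12 H(q)$, which is not obviously convex as written. It is cleaner to write it as $\tfrac12\KL(p\|m)+\tfrac12\KL(q\|m)$ with $m=(p+q)/2$. Each summand is jointly convex in $(p,q)$, since KL is jointly convex and $(p,q)\mapsto(p,m)$ is linear. Fixing $q$, the function $p\mapsto \mathrm{JSD}_{1/2}(p\|q)$ is therefore convex on the simplex. Jensen's inequality then gives $\mathbb{E}_V[\mathrm{JSD}_{1/2}(p_V\|q)]\ge \mathrm{JSD}_{1/2}(\mathbb{E}_V p_V\|q)=\mathrm{JSD}_{1/2}(\bar p\|q)$, which is Eq.~\ref{eq:jsd-noise}. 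Jensen is applicable because JSD is bounded by $\log 2$ and continuous on the compact simplex.

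The only real obstacle is the convexity justification in Step 3, specifically the joint convexity of KL. I would cite it via the log-sum inequality, applied entrywise to $a\log(a/b)$, which is jointly convex on $(0,\infty)^2$ as the perspective of $-\log$. I would also remark why no identity analogous to Step 1 holds for JSD. The mixture $m$ depends on $p_V$, so the cross term does not factor. This is exactly why the theorem claims only a lower bound and not an identification of the JSD minimizer.
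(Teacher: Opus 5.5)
Your proposal is correct and follows essentially the paper's proof. The forward-KL part is the same log-ratio split and Gibbs-inequality argument, including the boundary case. The JSD bound also rests on joint convexity of KL: the paper applies it directly to the pairs $(p_V,m_V)$ and $(q,m_V)$ using $\mathbb{E}_V[m_V]=\bar m$, whereas you first derive convexity of $p\mapsto\mathrm{JSD}_{1/2}(p\|q)$ and then apply Jensen. Your integrability assumption for non-finitely supported $V$ is not needed, because on a finite vocabulary $\KL(p\|q)\le\log(1/\min_v q(v))$ for every $p$, so all the expectations are automatically finite.
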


The analysis concerns full distributions and a fixed teacher, whereas \method uses top-$K$ truncation, clipping, and EMA updates. It supports reading distillation as the transfer of aggregate teacher information rather than the reproduction of any individual VT-conditioned prediction. The contrast between \methodnoise ($+0.40$pp) and \method ($+10.28$pp) points to semantic VT content as what makes that transfer work, and not depending on generated pixels at inference is one possible explanation for the student's advantage.

\section{Experiments}
\label{sec:exp}

\subsection{Experimental Setup}

\paragraph{Benchmarks.} We evaluate on 9 benchmarks in two regimes. The \textbf{VT-useful} group covers spatial reasoning, where generation knowledge should matter most: VSP~\citep{wu2024vsp} ($n$=120), VisPuzzle~\citep{gu2026thinkmorph} ($n$=200), BLINK-Jigsaw~\citep{fu2024blink} ($n$=150), SAT~\citep{ray2024sat} ($n$=150), and BLINK~\citep{fu2024blink} ($n$=671). The \textbf{VT-neutral} group covers general VLM tasks: ChartQA~\citep{masry2022chartqa} ($n$=944), V*~\citep{wu2024vstar} ($n$=191), MMVP~\citep{tong2024mmvp} ($n$=150), and CV-Bench~\citep{tong2024cvbench} ($n$=700).

\paragraph{Compared.} \textbf{BAGEL-7B} is the base UMM with no task-specific fine-tuning. \textbf{Text-only SFT} fine-tunes BAGEL-7B with CE loss on text reasoning traces without VT images for 2,000 steps; it starts from BAGEL-7B rather than ThinkMorph-7B because its training data overlaps ThinkMorph's original training distribution, so re-fitting the same data with CE loss would measure memorization rather than the text-only training signal. \methodnoise and \textbf{\method (Ours)} both run for 1,000 steps and differ only in whether the teacher context holds Gaussian noise or real VT images. \textbf{ThinkMorph} is the original model with full VT generation at inference, which serves as teacher and cost reference; we additionally evaluate it with VT generation disabled, the zero-shot text-only mode of Figure~\ref{fig:pilot}.

\paragraph{External Baselines.} GPT-4o, GPT-5, InternVL3.5-8B/38B~\citep{wang2025internvl3}, Qwen3-VL-8B/32B~\citep{bai2025qwen3}, Janus-Pro-7B~\citep{chen2025janus}, and Chameleon-7B~\citep{team2024chameleon}.

\paragraph{Training Details.} All \method variants are initialized from ThinkMorph-7B and trained on $8\times$H800 GPUs with FSDP (Hybrid Shard). Key hyperparameters: learning rate $1\mathrm{e}{-5}$ with cosine decay, EMA decay $0.995$, JSD $\beta{=}0.5$, top-$K{=}256$, token clip${=}0.05$, max completion length $1024$ tokens. The training data consists of $\sim$24K samples across 4 task categories. See Appendix~\ref{app:impl} for full details.

\subsection{Main Results}
\label{sec:main-results}

\begin{table}[!t]
\caption{Main results across 9 benchmarks. The three rows under \textit{Ours} report mean\,$\pm$\,standard deviation over 3 independent training and evaluation runs; every other row is a single evaluation. \textit{ThinkMorph (text-only)} is the same checkpoint with VT generation disabled at inference. \textbf{Bold}/\underline{underline}: best/second-best among open 7--8B models; larger and proprietary rows are for reference and excluded from ranking. VLM baselines on VStar use no external visual-search tools.}
\label{tab:main}
\centering

{\footnotesize
\setlength{\tabcolsep}{4pt}
\renewcommand{\arraystretch}{0.95}
\begin{tabularx}{\textwidth}{@{}l *{5}{>{\centering\arraybackslash}X}@{}}
\toprule
\textbf{Method} & \textbf{VSP}\,$\uparrow$ & \textbf{VisPuzzle}\,$\uparrow$ & \textbf{ChartQA}\,$\uparrow$ & \textbf{VStar}\,$\uparrow$ & \textbf{BLINK-J}\,$\uparrow$ \\
\midrule
\multicolumn{6}{@{}l}{\textit{Vision-Language Models (VLMs)}} \\
InternVL3.5-8B   & 8.17  & 34.75 & 76.26 & \underline{68.59} & \underline{71.33} \\
InternVL3.5-38B  & 20.16 & 36.50 & 80.44 & 76.96 & 80.67 \\
Qwen3-VL-8B     & 22.00 & 37.00 & \textbf{82.55} & \textbf{84.29} & 68.66 \\
Qwen3-VL-32B    & 43.50 & 50.75 & 83.52 & 84.80 & 70.66 \\
GPT-4o          & 33.50 & 43.75 & 76.34 & 61.78 & 72.67 \\
GPT-5           & 57.33 & 78.00 & 80.85 & 71.73 & 77.33 \\
\midrule
\multicolumn{6}{@{}l}{\textit{Unified Multimodal Models (UMMs)}} \\
Janus-Pro-7B    & 0.00  & 33.50 & 43.08 & 38.22 & 50.67 \\
Chameleon-7B    & 0.83  & 30.50 & 5.74  & 28.27 & 0.67  \\
BAGEL-7B        & 0.83  & 35.00 & 61.82 & 55.49 & 67.33 \\
ThinkMorph (text-only) & 73.27 & 76.25 & 76.80 & 64.92 & 68.00 \\
ThinkMorph      & \underline{75.83} & \underline{77.50} & 78.00 & 67.01 & 66.00 \\
\midrule
\multicolumn{6}{@{}l}{\textit{Ours}} \\
Text-only SFT   & 49.17\,{\scriptsize$\pm$1.06} & 63.50\,{\scriptsize$\pm$1.12} & \underline{81.66}\,{\scriptsize$\pm$1.16} & 56.02\,{\scriptsize$\pm$0.46} & 68.67\,{\scriptsize$\pm$0.72} \\
\methodnoise    & 50.83\,{\scriptsize$\pm$1.14} & 64.50\,{\scriptsize$\pm$0.53} & 73.77\,{\scriptsize$\pm$0.42} & 61.70\,{\scriptsize$\pm$0.52} & 68.66\,{\scriptsize$\pm$0.63} \\
\rowcolor{bestrow}
\textbf{\method (Ours)} & \textbf{85.83}\,{\scriptsize$\pm$0.89} & \textbf{86.00}\,{\scriptsize$\pm$0.82} & 78.79\,{\scriptsize$\pm$1.27} & 64.92\,{\scriptsize$\pm$0.70} & \textbf{77.33}\,{\scriptsize$\pm$0.63} \\
\midrule
\rowcolor{deltarow}
$\Delta$ \textit{vs.} ThinkMorph    & \textcolor{teal}{+10.00} & \textcolor{teal}{+8.50}  & \textcolor{teal}{+0.79}  & \textcolor{red!75!black}{$-$2.09} & \textcolor{teal}{+11.33} \\
\rowcolor{deltarow}
$\Delta$ \textit{vs.} Text-only SFT & \textcolor{teal}{+36.66} & \textcolor{teal}{+22.50} & \textcolor{red!75!black}{$-$2.87} & \textcolor{teal}{+8.90} & \textcolor{teal}{+8.66} \\
\bottomrule
\end{tabularx}%
}

\vspace{0.1em}

{\footnotesize
\setlength{\tabcolsep}{3pt}
\renewcommand{\arraystretch}{0.95}
\begin{tabularx}{\textwidth}{@{}l *{6}{>{\centering\arraybackslash}X}@{}}
\toprule
\textbf{Method} & \textbf{MMVP}\,$\uparrow$ & \textbf{SAT}\,$\uparrow$ & \textbf{BLINK}\,$\uparrow$ & \mbox{\textbf{CV-Bench}\,$\uparrow$} & \textbf{Avg}\,$\uparrow$ & \textbf{Lat.\,(s)}\,$\downarrow$ \\
\midrule
\multicolumn{7}{@{}l}{\textit{Vision-Language Models (VLMs)}} \\
InternVL3.5-8B  & 76.33 & 45.33 & 59.60 & \underline{81.99} & 58.04 & -- \\
InternVL3.5-38B & 80.33 & 49.33 & 62.65 & 85.96 & 63.67 & -- \\
Qwen3-VL-8B    & 77.66 & \textbf{54.00} & \textbf{69.43} & \textbf{85.63} & 64.58 & -- \\
Qwen3-VL-32B   & 80.00 & 76.00 & 66.22 & 87.51 & 71.44 & -- \\
GPT-4o         & 84.67 & 28.00 & 60.28 & 75.61 & 59.62 & -- \\
GPT-5          & 86.33 & 73.30 & 69.86 & 85.46 & 75.58 & -- \\
\midrule
\multicolumn{7}{@{}l}{\textit{Unified Multimodal Models (UMMs)}} \\
Janus-Pro-7B   & 63.33 & 22.00 & 38.51 & 67.83 & 39.68 & -- \\
Chameleon-7B   & 47.67 & 10.67 & 16.52 & 36.52 & 19.71 & -- \\
BAGEL-7B       & 70.33 & 44.67 & 47.66 & 76.03 & 51.02 & -- \\
ThinkMorph (text-only) & \textbf{79.00} & 51.33 & 59.50 & 80.07 & 69.90 & -- \\
ThinkMorph     & \underline{78.33} & \underline{52.67} & 59.49 & 80.86 & \underline{70.63} & 142.8 \\
\midrule
\multicolumn{7}{@{}l}{\textit{Ours}} \\
Text-only SFT  & 76.33\,{\scriptsize$\pm$0.37} & 46.63\,{\scriptsize$\pm$0.79} & 54.39\,{\scriptsize$\pm$0.67} & 77.37\,{\scriptsize$\pm$0.97} & 63.75 & 28.5 \\
\methodnoise   & 75.33\,{\scriptsize$\pm$0.36} & 48.00\,{\scriptsize$\pm$1.15} & 55.49\,{\scriptsize$\pm$0.85} & 79.09\,{\scriptsize$\pm$1.14} & 64.15 & \underline{14.9} \\
\rowcolor{bestrow}
\textbf{\method (Ours)} & 77.33\,{\scriptsize$\pm$0.68} & \textbf{54.00}\,{\scriptsize$\pm$1.21} & \underline{61.44}\,{\scriptsize$\pm$0.69} & 80.64\,{\scriptsize$\pm$0.51} & \textbf{74.03} & \textbf{10.0} \\
\midrule
\rowcolor{deltarow}
$\Delta$ \textit{vs.} ThinkMorph    & \textcolor{red!75!black}{$-$1.00} & \textcolor{teal}{+1.33} & \textcolor{teal}{+1.95} & \textcolor{red!75!black}{$-$0.22} & \textcolor{teal}{+3.40}  & \textcolor{teal}{$14.3\times$} \\
\rowcolor{deltarow}
$\Delta$ \textit{vs.} Text-only SFT & \textcolor{teal}{+1.00}  & \textcolor{teal}{+7.37} & \textcolor{teal}{+7.05} & \textcolor{teal}{+3.27}  & \textcolor{teal}{+10.28} & \textcolor{teal}{$2.9\times$} \\
\bottomrule
\end{tabularx}%
}
\end{table}

\paragraph{The student overtakes its teacher.} \method exceeds the VT teacher on 6 of 9 benchmarks, $+3.40$pp on average, with the margin concentrated where generation knowledge should matter most: BLINK-J $+11.3$, VSP $+10.0$, VisPuzzle $+8.5$. Two of the remaining differences carry no information. ChartQA ($+0.79$pp) and CV-Bench ($-0.22$pp) both fall inside the run-to-run spread of $\pm1.27$ and $\pm0.51$ in Table~\ref{tab:main}, so we do not read a direction into them.

\paragraph{Transfer is task-specific.} Against text-only SFT, \method gains $+10.28$pp on average, again concentrated on the VT-useful tasks: VSP $+36.7$, VisPuzzle $+22.5$, BLINK-J $+8.7$, SAT $+7.4$, BLINK $+7.1$. Among the VT-neutral tasks only V* moves comparably ($+8.9$); CV-Bench ($+3.3$) and MMVP ($+1.0$) move little and ChartQA drops by $2.87$pp, which fits chart understanding having little to gain from spatial generation knowledge. The same spatial advantage separates \method from same-scale VLMs: on VSP it reaches $85.8$ against $8.2$ for InternVL3.5-8B and $22.0$ for Qwen3-VL-8B, so generation training supplies spatial reasoning that dedicated VLMs at this scale do not have (per-task visualization in Appendix~\ref{app:transfer-efficiency-fig}).

\paragraph{Speed comes partly from brevity.} \method runs at 10.0s per sample, $14.3\times$ faster than the VT teacher (142.8s) and $2.9\times$ faster than text-only SFT (28.5s), under identical greedy decoding (temperature${=}0$, $1024$-token cap) on the same hardware. Part of the second factor is output length: \method emits $201$ tokens per sample on average against $411$ for SFT and $452$ for ThinkMorph's text stream (Appendix~\ref{app:token-stats}).

That compression deserves a comment, since a student trained against a verbose teacher might be expected to inherit its verbosity. It does not, because the teacher never writes anything: it only rescores the student's own on-policy trajectory, so length is set by the student's sampling and the only pressure on it comes from the per-position targets. Two properties of the objective push those targets short. The loss is pure JSD with no CE term, so nothing anchors the student to the long reference traces that the SFT baseline is fitted on. And the teacher has already been shown the VTs, so it has little reason to spend mass on narrating what the image contains; where the student opens such a narration, matching the teacher pulls mass toward committing to an answer instead. We did not isolate this second effect on its own, but it matches the token-level picture in Figure~\ref{fig:kl-heatmap}, where divergence sits on content rather than function words, and it matches compression being largest on VisPuzzle and BLINK-J, the tasks whose baseline traces carry the most narration.

\subsection{VT Information Quality Scaling}
\label{sec:vt-quality}

To separate generation knowledge from regularization, we vary the quality of the VT content in the teacher context and leave everything else fixed.

\begin{table}[h]
\caption{VT information quality scaling. \textbf{Avg\,(9)} is the mean over all 9 benchmarks of Table~\ref{tab:main}; the four columns to its left are representative tasks listed for readability.}
\label{tab:vt-quality}
\begin{center}
\small
\setlength{\tabcolsep}{4pt}
\begin{tabular}{lccccccc}
\toprule
\textbf{Variant} & \textbf{Teacher VT} & \textbf{VSP} & \textbf{VisPuz.} & \textbf{BLK-J} & \textbf{VStar} & \textbf{Avg\,(9)} & \textbf{$\Delta$\,SFT} \\
\midrule
ThinkMorph (text-only) & None (zero-shot) & 73.27 & 76.25 & 68.00 & 64.92 & 69.90 & +6.15 \\
Text-only SFT & None & 49.17 & 63.50 & 68.67 & 56.02 & 63.75 & -- \\
\methodnoise & Gaussian noise & 50.83 & 64.50 & 68.66 & 61.70 & 64.15 & +0.40 \\
\textbf{\method (Ours)} & Real VT & \textbf{85.83} & \textbf{86.00} & \textbf{77.33} & \textbf{64.92} & \textbf{74.03} & \textbf{+10.28} \\
\bottomrule
\end{tabular}
\end{center}
\end{table}

Real VT yields $+10.28$pp over SFT, against $+0.40$pp for noise. \methodnoise holds every non-VT element of the teacher context fixed, including the system prompt, question, reference intro and transition prompt, and swaps only the VT pixels for Gaussian noise of the same shape. Neither JSD regularization nor the surrounding privileged structure can therefore account for \method's gains.

\subsection{Analysis}
\label{sec:analysis}

\paragraph{Per-sample win/loss analysis.}
Aggregate accuracy hides how the two models disagree, so we compare their predictions sample by sample (Figure~\ref{fig:win-loss}). On the complex spatial benchmarks (BLINK-J net $+11.3$pp, VSP $+10.0$pp, VisPuzzle $+8.5$pp) \method wins far more samples than it loses. These tasks need multi-step spatial reasoning, the quality of ThinkMorph's generated VTs is bounded by its diffusion model, and Figure~\ref{fig:attention-layer} indicates that a rendered VT tends to dominate whatever reasoning follows it regardless of quality. Reasoning straight from the original input, \method avoids that dependence while keeping the generation pathway's knowledge in distribution (further examples in Appendix~\ref{app:case-study}).

ThinkMorph keeps a modest edge on three benchmarks (V* $-$2.1pp, MMVP $-$1.0pp, CV-Bench $-$0.2pp), all visually simpler, where its VTs come out cleaner. There the cost of losing explicit VT images outweighs the benefit of ignoring them: magnified object details and highlighted regions are cues that distributional distillation does not fully replace (failure cases in Appendix~\ref{app:failure-cases}).

\begin{figure}[!ht]
    \centering
    \includegraphics[width=0.66\textwidth]{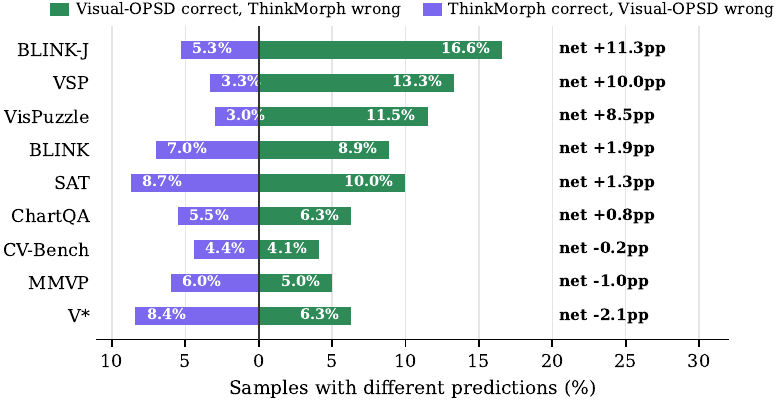}
    \caption{Per-sample win/loss between \method and ThinkMorph. Green: \method correct and ThinkMorph wrong; purple: the reverse. The deficits on ThinkMorph-leading benchmarks are small and near-symmetric.}
    \label{fig:win-loss}
\end{figure}

\section{Conclusion}
\label{sec:conclusion}

We introduced \methodfull (\method), the first On-Policy Self-Distillation framework to operate across modalities within a single unified multimodal model. The visual generation pathway of a UMM turns out to hold reasoning knowledge that its rendered pixels do not deliver, and on-policy JSD moves that knowledge into the text understanding pathway with no architectural change. The \method student beats its generative teacher on 6 of 9 benchmarks ($+3.40$pp on average) at a $14.3\times$ speedup, and clears same-scale dedicated VLMs by a wide margin on spatial reasoning. The \methodnoise control ($+0.40$pp against $+10.28$pp) and the post-distillation KL analysis ($58.4\%$ against $3.5\%$ gap closure) place the transferred signal in the semantic content of the generation pathway rather than in regularization.

\paragraph{Broader implications.} The recipe should carry to any model with two pathways separated by an information asymmetry, suggesting cross-modal OPSD variants for audio-language and tool-using systems, and to other UMMs such as Chameleon, Emu3, and Janus-Pro.

\bibliography{references}
\bibliographystyle{plainnat}

\appendix
\section{Related Work}
\label{app:related}

\paragraph{On-Policy Self-Distillation (OPSD) and its emerging family.}
\method belongs to the recently emerging family of on-policy self-distillation (OPSD) methods, in which a \emph{single} model instantiates both teacher and student by conditioning on different contexts, and token-level distillation is performed along the student's own on-policy trajectories. The principle was introduced by~\citet{zhao2026self} for text-only reasoning: the teacher conditions on the verified ground-truth solution while the student sees only the problem, and JSD on the student's rollouts transfers privileged reasoning knowledge without an external teacher model. \citet{zhang2026opsdl} extends OPSD to long-context language modeling, using a short-context self-teacher to denoise long-context generation. Closest to us in modality is Vision-OPD~\citep{yuan2026visionopd}, which applies OPSD within the visual modality: a crop-conditioned teacher supervises a full-image student to transfer fine-grained regional perception. \method differs from all prior OPSD instances in a critical way: the teacher--student information gap is \emph{cross-modal}, namely between the generation pathway (which has internalized visual reasoning via diffusion training) and the understanding pathway (which has not), rather than within text (OPSD), within long-vs-short context (OPSDL), or within visual crops (Vision-OPD). To our knowledge, \method is the first OPSD framework that bridges a generation--understanding gap within a unified multimodal architecture.

\paragraph{Knowledge Distillation and Privileged Information.}
Classical knowledge distillation~\citep{hinton2015distilling} transfers knowledge from a larger teacher to a smaller student. Born-Again Networks~\citep{furlanello2018born} showed that self-distillation between same-capacity networks can improve performance. CoT-style rationale distillation~\citep{hsieh2023distilling,magister2023teaching} extracts reasoning from larger models into smaller ones, while DPO-style methods~\citep{rafailov2024direct} align via preference signals. Learning Using Privileged Information (LUPI)~\citep{vapnik2015learning} formalizes the teacher--student gap in terms of information rather than capacity, and Zooming without Zooming~\citep{wei2026zooming} applies it to visual zooming with better data as the privileged signal. \method differs in kind from all of these: the teacher--student gap is neither in capacity nor in data quality, but in the \emph{modality of conditioning} within a single shared parameter set. The teacher's privileged information is the activation pattern induced by the generation pathway on its own VT outputs, that is, an internal-state asymmetry rather than an external data asymmetry.

\paragraph{Unified Multimodal Models.}
Recent work converges on architectures that handle visual understanding and generation within a single model: BAGEL~\citep{deng2025emerging} fuses a Qwen2.5 LLM backbone with a SigLIP vision encoder and a FLUX VAE; ThinkMorph~\citep{gu2026thinkmorph} adds interleaved visual chain-of-thought, generating intermediate images during reasoning; other notable UMMs include Chameleon~\citep{team2024chameleon}, Emu3~\citep{wang2024emu3}, Janus~\citep{wu2024janus} and Janus-Pro~\citep{chen2025janus}, and Transfusion~\citep{zhou2024transfusion}. Any UMM whose generation pathway can act as a privileged knowledge source is a candidate teacher for cross-modal OPSD.

\paragraph{Visual Chain-of-Thought.}
Visual CoT~\citep{rose2023visual} and Multimodal CoT~\citep{zhang2023multimodal} explore generating intermediate visual representations during reasoning. Visual Sketchpad~\citep{hu2024visual} uses code-generated sketches as reasoning aids. All of these methods retain visual generation at inference and therefore inherit its cost. \method instead distills the \emph{knowledge} from visual CoT into the text understanding pathway, removing the inference-time generation cost without losing reasoning accuracy.

\section{Proof of Theorem~\ref{thm:signal-noise}}
\label{app:proofs}

\begin{proof}
\textbf{(a) Forward-KL decomposition.} With the input, completion prefix, and teacher weights fixed, let $\bar p(v)=\mathbb{E}_V[p_V(v)]$. Splitting the log-ratio gives
\begin{align*}
\mathbb{E}_V[\KL(p_V\|q)]
&=\mathbb{E}_V\!\left[\sum_v p_V(v)\log\frac{p_V(v)}{\bar p(v)}\right]
  +\sum_v\mathbb{E}_V[p_V(v)]\log\frac{\bar p(v)}{q(v)}\\
&=\mathbb{E}_V[\KL(p_V\|\bar p)]+\KL(\bar p\|q).
\end{align*}
The first term is independent of $q$. Nonnegativity of KL, with equality only when $q=\bar p$, proves the unique minimizer. Boundary distributions with $q(v)=0$ have infinite forward KL because $\bar p(v)>0$.

\textbf{(b) JSD lower bound.} Write $m_V=(p_V+q)/2$ and $\bar m=(\bar p+q)/2=\mathbb{E}_V[m_V]$. Joint convexity of KL yields
\begin{align*}
\mathbb{E}_V[\mathrm{JSD}_{1/2}(p_V\|q)]
&=\tfrac12\mathbb{E}_V[\KL(p_V\|m_V)]
  +\tfrac12\mathbb{E}_V[\KL(q\|m_V)]\\
&\geq\tfrac12\KL(\bar p\|\bar m)+\tfrac12\KL(q\|\bar m)\\
&=\mathrm{JSD}_{1/2}(\bar p\|q).
\end{align*}
This proves Eq.~\ref{eq:jsd-noise}.
\end{proof}

\section{Implementation Details}
\label{app:impl}

\paragraph{Model Architecture.} ThinkMorph-7B, the checkpoint \method starts from, consists of: Qwen2.5 LLM backbone with Qwen2MoTDecoderLayer ($\sim$7B parameters), SigLIP-so400m-14-980 NaViT vision encoder, and FLUX VAE for latent image encoding. Total trainable parameters: $\sim$1820M (99.9\% of total 1822M).

\paragraph{Training Infrastructure.} 8$\times$NVIDIA H800 80GB GPUs with FSDP (HYBRID\_SHARD strategy). Activation checkpointing on all Qwen2MoTDecoderLayer modules. Optimizer state CPU offloading during sampling and forward/backward to manage memory constraints.

\paragraph{Hyperparameters.} Table~\ref{tab:hyperparameters} lists the training configuration.
\begin{table}[htbp]
\caption{Training hyperparameters for \method.}
\label{tab:hyperparameters}
\begin{center}
\small
\begin{tabular}{ll|ll}
\toprule
\textbf{Parameter} & \textbf{Value} & \textbf{Parameter} & \textbf{Value} \\
\midrule
Learning rate & 1e-5 & JSD $\beta$ & 0.5 \\
Min learning rate & 1e-7 & JSD temperature & 1.0 \\
LR scheduler & Cosine & JSD top-$K$ & 256 \\
Warmup steps & 200 & Token clip & 0.05 \\
Total steps & 1000 & Loss & Pure JSD \\
AdamW $(\beta_1, \beta_2)$ & (0.9, 0.95) & On-policy sampling & Yes \\
AdamW $\epsilon$ & 1e-15 & EMA decay & 0.995 \\
Max grad norm & 1.0 & Teacher mode & EMA \\
Gradient accum. steps & 2 & Max completion tokens & 1024 \\
Max forward tokens & 10240 & Image skips & 1 \\
\bottomrule
\end{tabular}
\end{center}
\end{table}

\paragraph{Data.} Training data consists of $\sim$24,990 samples across four task categories, identical to the ThinkMorph training distribution~\citep{gu2026thinkmorph}: Visual Search (6,990), Spatial Navigation (6,000), Jigsaw Assembly (6,000), Chart Refocus (6,000). Each sample contains a problem image, question, interleaved textual reasoning traces with VT images, and the answer. Training and evaluation data are disjoint: training samples are drawn from designated training splits, while all 9 evaluation benchmarks use their respective held-out test sets. No evaluation benchmark images or questions appear in the training data. The 1,000 samples used for the KL diagnostic of Section~\ref{sec:kl-diag} and Appendix~\ref{app:kl-closing} are drawn from the same four categories but are held out from training. Images are processed with NaViT-style patching: stride 14, max size 980, min size 378, max pixels 2,007,040.

\paragraph{Visual token budget.} The two caps interact as follows. A ViT grid is limited to 70 patches per side, so one image contributes at most $70\times70 = 4{,}900$ tokens; the max-pixel setting of 2,007,040 is an area cap that the side cap makes unreachable, because a $70\times70$ grid at stride 14 covers at most $960{,}400$ pixels. Dividing the area cap by $14^2$ therefore overstates the per-image cost by more than a factor of two. The student's forward pass holds the system prompt, the problem image and the question ahead of a completion capped at 1,024 tokens, so it stays under ${\approx}6{,}000$ tokens and never reaches the 10,240 budget.

\paragraph{Memory Optimization.} Key optimizations to fit within 8$\times$80GB: (1)~AdamW state CPU offload during sampling ($\sim$14.5 GiB freed); (2)~FSDP precision patch preventing FP32 upcast during parameter unshard (58GB$\to$29GB); (3)~Aggressive tensor release between student/teacher forward passes; (4)~Sequence length limiting (max 10,240 tokens) with automatic completion truncation.

\paragraph{Training Loop.} Algorithm~\ref{alg:opsd-vpd} summarizes one optimization step of \method, including on-policy sampling with image-skip injection, dual teacher/student forward passes, the per-token JSD objective, and the EMA teacher update.

\begin{algorithm}[!ht]
\caption{\method Training Loop}
\label{alg:opsd-vpd}
\begin{algorithmic}[1]
\REQUIRE Model $M_\theta$, EMA teacher $M_{\bar\theta}$, dataset $\mathcal{D}$
\FOR{each training step}
    \STATE Sample raw data $x \sim \mathcal{D}$ (problem image, question, reference trace with VT)
    \STATE \textbf{On-policy sampling}: Generate completion $c \sim M_\theta(\cdot \mid \text{img}, \text{question})$
    \STATE \quad On \texttt{<image\_start>}, append \texttt{<image\_end>} to the context and resume text sampling
    \STATE \quad (the injected token is excluded from $c$, so it carries no loss term)
    \STATE Build student batch: $[\text{sys}, \text{ViT}(\text{img}), \text{q}, c]$
    \STATE Build teacher batch: $[\text{sys}, \text{ViT}(\text{img}), \text{q}, \text{intro}, (\text{ViT}(\text{VT}_i))^+, \text{trans}, c]$ using $M_{\bar\theta}$
    \STATE \textbf{Dual forward}: Compute $\text{logits}_S$ (with grad), $\text{logits}_T$ (no grad)
    \STATE Restrict to teacher top-$K$, renormalize both sides, accumulate clipped per-entry JSD (Eq.~\ref{eq:vpd-loss-impl})
    \STATE Backward, optimizer step, EMA update: $\bar\theta \leftarrow \alpha \bar\theta + (1-\alpha)\theta$
\ENDFOR
\end{algorithmic}
\end{algorithm}

\section{Initialization Rationale}
\label{app:init-rationale}

\method variants (\method, \methodnoise) are initialized from ThinkMorph-7B because the teacher pathway requires interleaved visual generation capability, which is present only in ThinkMorph and not in the base BAGEL model. \method does not optimize a CE loss on the training data; it distills distributional knowledge via JSD on on-policy completions, so it does not directly memorize training examples. Text-only SFT, in contrast, is initialized from BAGEL-7B: it trains with standard CE on text reasoning traces drawn from the same distribution used to train ThinkMorph, so initializing from ThinkMorph would amount to re-fitting on already-seen data, producing overfitting rather than a fair assessment of the text-only training signal. The $+10.28$pp gain of \method over Text-only SFT reflects two factors: (1)~the ThinkMorph initialization, which already embeds generation-trained representations, and (2)~the \method distillation objective. The \methodnoise control isolates factor~(2): it uses the same ThinkMorph initialization and the same objective structure but replaces real VT pixels with semantic-free Gaussian noise, and gains only $+0.40$pp over SFT. The transfer of the generation-pathway signal therefore requires the distillation objective applied to semantically meaningful VT content, not the ThinkMorph initialization on its own. Table~\ref{tab:main} additionally reports ThinkMorph in text-only inference mode ($69.90$) as an untrained reference for that same initialization. \methodnoise sits below it, so distilling against a semantically empty teacher does not merely fail to help but costs accuracy relative to the initialization it starts from, whereas \method exceeds the reference by $+4.13$pp.

\section{Teacher Context Prompts}
\label{app:prompts}

The teacher context $\mathcal{C}_T$ (Eq.~\ref{eq:ctx-teacher}) uses two framing prompts to separate the privileged visual reasoning trace from the student's own completion. Their full text is given below.

\paragraph{Reference Introduction (\textnormal{ref\_intro}).} Placed immediately before the privileged VT images. Because the privileged channel is visual-only, the prompt explicitly refers to images rather than to a textual reasoning trace:
\begin{quote}
\small\itshape
``The following images are privileged visual references that depict the intermediate visual thoughts on the path to the correct answer. Use them silently as grounding context; do not describe or echo them.''
\end{quote}

\paragraph{Transition Prompt (\textnormal{transition}).} Placed after the privileged VT images and before the completion tokens on which loss is computed:
\begin{quote}
\small\itshape
``Now, using your own independent reasoning, answer the problem above. Think step by step.''
\end{quote}

Both prompts carry zero loss weight during training; they serve only to structure the teacher's KV cache so that the privileged VT images are absorbed as visual context rather than directly copied or described in the completion.

\section{Per-Token KL Divergence Analysis}
\label{app:kl-heatmap}

\begin{figure}[h]
\centering
\includegraphics[width=\linewidth]{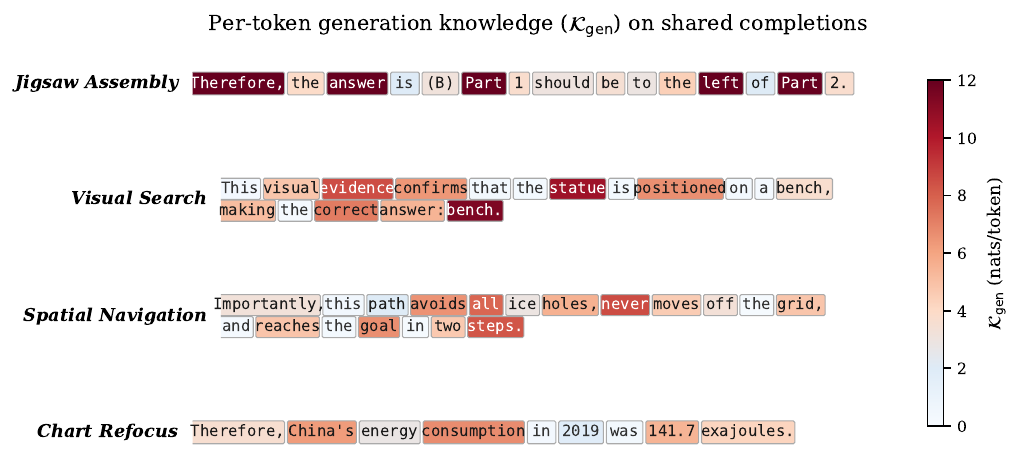}
\caption{Per-token $\mathcal{K}_{\text{gen}}$ on representative completions drawn from the held-out set of each task category. Generation knowledge concentrates on informationally critical tokens such as spatial labels (\texttt{Part}, \texttt{left}), object references (\texttt{statue}, \texttt{bench}), navigation decisions (\texttt{goal}, \texttt{avoids}), and quantitative values (\texttt{141.7}, \texttt{2019}), while function words carry near-zero divergence.}
\label{fig:kl-heatmap}
\end{figure}

The per-token KL analysis reveals that generation knowledge is \emph{non-uniform}: it concentrates on tokens encoding spatial relations, quantities, and visual-grounded answers, while function words and syntactic connectives exhibit near-zero KL. This pattern is distinct from the high-entropy ``forking tokens'' observed in reinforcement learning~\citep{wang2025beyond}, where connectives represent trajectory-level decision points. Our cross-context KL instead measures how much additional \emph{perceptual evidence} the VT context contributes at each position: VT informs \emph{what} the answer is (content tokens) rather than \emph{how} to express it (syntactic structure), producing the observed content-specific divergence pattern. Since teacher and student share both the model weights and the completion tokens, the measured divergence is attributable solely to the privileged VT context.

\section{Output Token Statistics}
\label{app:token-stats}

\begin{table}[h]
\caption{Mean output tokens per sample. All methods use greedy decoding (temperature=0, max 1024 tokens). ThinkMorph token counts exclude diffusion steps (text tokens only).}
\label{tab:token-stats}
\begin{center}
\small
\begin{tabular}{lccc}
\toprule
\textbf{Benchmark} & \textbf{\method} & \textbf{ThinkMorph} & \textbf{Text-only SFT} \\
\midrule
BLINK       & 258.2 & 522.9 & 439.4 \\
BLINK-J     & 287.9 & 604.6 & 552.1 \\
ChartQA (h) & 181.8 & 397.7 & 360.6 \\
ChartQA (v) & 194.7 & 416.8 & 379.6 \\
CV-Bench-2D & 148.3 & 304.1 & 307.2 \\
CV-Bench-3D & 171.1 & 383.0 & 349.8 \\
MMVP        & 165.7 & 407.2 & 371.9 \\
SAT         & 189.2 & 404.5 & 369.4 \\
VisPuzzle   & 263.4 & 768.1 & 701.5 \\
V*          & 153.9 & 308.4 & 281.6 \\
\midrule
\textbf{Average} & \textbf{201.4} & 451.7 & 411.3 \\
\bottomrule
\end{tabular}
\end{center}
\end{table}

\method produces shorter outputs on every benchmark in Table~\ref{tab:token-stats}, whose 10 rows split ChartQA and CV-Bench into their two variants. On average \method generates 201.4 tokens per sample, $2.0\times$ fewer than SFT (411.3) and $2.2\times$ fewer than ThinkMorph (451.7, text tokens only). The compression is largest on VisPuzzle (263.4 against 701.5 for SFT, $2.7\times$) and BLINK-J (287.9 against 604.6 for ThinkMorph, $2.1\times$), both complex spatial tasks where the baselines write lengthy ``I observe that\ldots'' narrations. Even where accuracy is comparable across methods, as on CV-Bench-2D, \method emits ${\sim}2\times$ fewer tokens, so the conciseness is a general property of the distillation rather than an artifact of particular tasks.

\section{Per-Task Knowledge Transfer and Inference Efficiency}
\label{app:transfer-efficiency-fig}

Figure~\ref{fig:transfer_efficiency} visualizes the per-task gain pattern and the latency profile discussed in Section~\ref{sec:main-results}. Spatial reasoning benchmarks dominate the mean $+10.28$pp gain over Text-only SFT while ChartQA shows essentially no transfer, and \method at 10.0s per sample is $14.3\times$ faster than the VT teacher and $2.9\times$ faster than text-only SFT, the latter factor following from its ${\sim}2\times$ shorter outputs (Appendix~\ref{app:token-stats}).

\begin{figure}[h]
    \centering
    \begin{subfigure}[t]{0.48\textwidth}
        \centering
        \includegraphics[width=\textwidth]{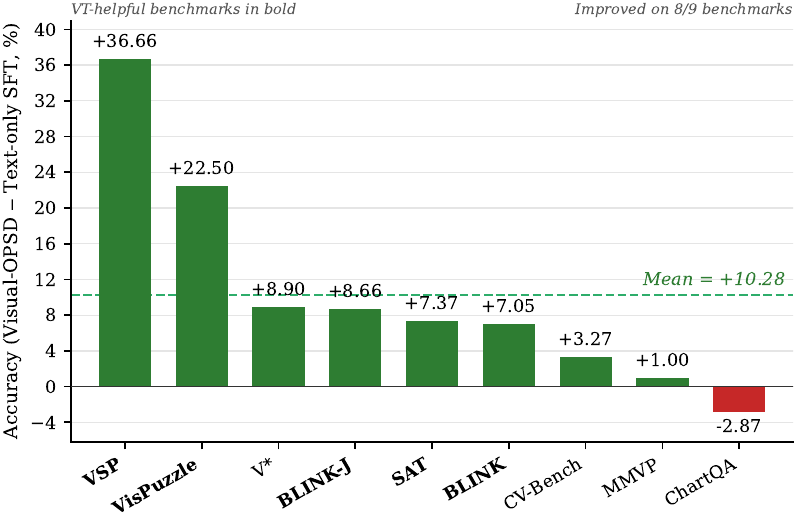}
        \caption{}
        \label{fig:delta_sft}
    \end{subfigure}
    \hfill
    \begin{subfigure}[t]{0.48\textwidth}
        \centering
        \includegraphics[width=\textwidth]{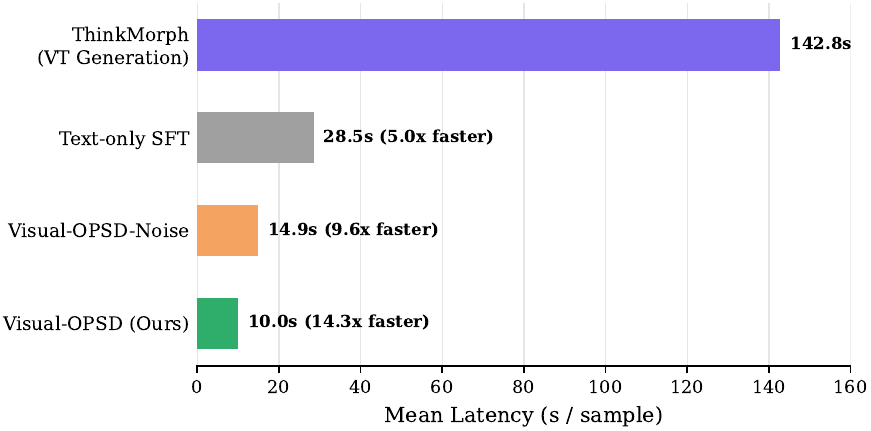}
        \caption{}
        \label{fig:latency}
    \end{subfigure}
    \caption{%
        \textbf{Task-specific knowledge transfer and inference efficiency.}
        \textbf{(a)}~Generation knowledge transfers selectively: spatial reasoning tasks benefit most (mean $\Delta$=+10.28pp over Text-only SFT), while chart understanding shows minimal change.
        \textbf{(b)}~\method is faster than both the VT teacher (14.3$\times$) and text-only SFT (2.9$\times$), suggesting distillation produces more concise reasoning.
    }
    \label{fig:transfer_efficiency}
\end{figure}



\section{Knowledge Transfer Verification: Post-Distillation KL Analysis}
\label{app:kl-closing}

The KL diagnostic in Section~\ref{sec:kl-diag} establishes that a distributional gap $\mathcal{K}_{\text{gen}}$ exists between teacher (with VT context) and student (without) \emph{before} any distillation training. A natural question is whether \method successfully closes this gap, which would provide direct evidence that generation knowledge has been internalized into the understanding pathway.

\paragraph{Protocol.} We re-run the identical KL diagnostic (Eq.~\ref{eq:gen-knowledge}) on the same 1,000 held-out samples, replacing the student with each trained checkpoint. These samples never appear in the training data, so the numbers below measure generalization of the distillation objective rather than its fit on the optimized examples. For each variant, we compute:
\begin{equation}
    \Delta\mathcal{K} = 1 - \frac{\mathcal{K}_{\text{gen}}^{\text{post}}}{\mathcal{K}_{\text{gen}}^{\text{pre}}} \in [0, 1]
\end{equation}
where $\Delta\mathcal{K}$ represents the fraction of the teacher--student distributional gap that has been closed by training. A value of 1 indicates perfect knowledge internalization; 0 indicates no transfer.

\begin{table}[h]
\caption{Post-distillation KL gap analysis on 1,000 held-out samples. We measure the teacher--student distributional gap $\mathcal{K}_{\text{gen}}$ (nats/token) before and after training for each variant (both initialized from ThinkMorph-7B). \method closes 58.4\% of the gap on average, with the largest reductions on spatial reasoning tasks, while \methodnoise closes under 4\%.}
\label{tab:kl-closing}
\begin{center}
\small
\begin{tabular}{lccc}
\toprule
\textbf{Task Category} & $\boldsymbol{\mathcal{K}_{\text{gen}}^{\text{pre}}}$ & \multicolumn{2}{c}{$\boldsymbol{\mathcal{K}_{\text{gen}}^{\text{post}}}$ \textbf{(nats/token)} [$\boldsymbol{\Delta\mathcal{K}}$\textbf{\%}]} \\
\cmidrule(lr){3-4}
& \textbf{(frozen)} & \textbf{\methodnoise} & \textbf{\method (Ours)} \\
\midrule
Visual Search & 4.23 & 4.08 \scriptsize[3.5\%] & 1.72 \scriptsize[\textbf{59.3\%}] \\
Spatial Navigation & 3.96 & 3.82 \scriptsize[3.5\%] & 1.48 \scriptsize[\textbf{62.6\%}] \\
Chart Refocus & 3.51 & 3.39 \scriptsize[3.4\%] & 2.14 \scriptsize[\textbf{39.0\%}] \\
Jigsaw Assembly & 6.84 & 6.61 \scriptsize[3.4\%] & 2.37 \scriptsize[\textbf{65.4\%}] \\
\midrule
\textbf{Overall} & \textbf{4.64} & 4.48 \scriptsize[3.5\%] & \textbf{1.93} \scriptsize[\textbf{58.4\%}] \\
\bottomrule
\end{tabular}
\end{center}
\end{table}

\paragraph{Results and interpretation.}

The \method student closes 58.4\% of the teacher--student gap on average. On samples it was never trained on, its predictions, made \emph{without} any VT context, move most of the way toward what the model would produce \emph{if} it had observed the full sequence of privileged VT images, which is what transfer of generation-pathway knowledge into the understanding pathway should look like.

The pattern tracks the accuracy gains of Table~\ref{tab:main}. Jigsaw Assembly closes the most ($65.4\%$) and carries the largest improvement over the teacher (BLINK-J $+11.3$pp); Chart Refocus closes the least ($39.0\%$) and is the one category where distillation fails to beat text-only SFT (ChartQA $-2.87$pp). \methodnoise closes $3.5\%$, which is attributable to mild EMA-based regularization rather than transfer. Aligning with the VT-conditioned teacher therefore requires semantically meaningful VT content; Gaussian noise in the teacher context supplies almost no learning signal.

\section{JSD Hyperparameter Sensitivity}
\label{app:jsd-ablation}

We ablate the two key JSD loss hyperparameters (top-$K$ vocabulary truncation and per-token clipping threshold) to assess \method's robustness. All variants are trained for 1,000 steps with other hyperparameters held at their default values (Section~\ref{app:impl}). We report average accuracy across 3 representative benchmarks (VSP, VisPuzzle, BLINK-J).

\begin{table}[h]
\caption{JSD hyperparameter sensitivity. \textbf{Left}: varying top-$K$ with clip=0.05. \textbf{Right}: varying clip threshold with $K$=256. The default configuration ($K$=256, clip=0.05) achieves the best overall performance, but \method is robust across a wide range of settings.}
\label{tab:jsd-ablation}
\begin{center}
\small
\begin{tabular}{lcccc|lcccc}
\toprule
\textbf{Top-$K$} & \textbf{VSP} & \textbf{VisPuz.} & \textbf{BLK-J} & \textbf{Avg$_3$} & \textbf{Clip} & \textbf{VSP} & \textbf{VisPuz.} & \textbf{BLK-J} & \textbf{Avg$_3$} \\
\midrule
64  & 81.67 & 83.50 & 73.33 & 79.50 & None & 82.50 & 83.00 & 74.00 & 79.83 \\
128 & 84.17 & 85.00 & 76.00 & 81.72 & 0.01 & 83.33 & 84.50 & 75.33 & 81.05 \\
\textbf{256} & \textbf{85.83} & \textbf{86.00} & \textbf{77.33} & \textbf{83.05} & \textbf{0.05} & \textbf{85.83} & \textbf{86.00} & \textbf{77.33} & \textbf{83.05} \\
512 & 84.17 & 85.50 & 76.00 & 81.89 & 0.10 & 84.17 & 85.00 & 75.33 & 81.50 \\
\bottomrule
\end{tabular}
\end{center}
\end{table}

Table~\ref{tab:jsd-ablation} shows that \method is robust to hyperparameter choices: all configurations substantially outperform the Text-only SFT baseline (Avg$_3$=60.45). Performance degrades modestly with very small $K$ (64) due to loss of distributional information in the long tail, or without clipping (None) where noisy style-token gradients introduce variance. The default $K$=256 and clip=0.05 achieve the best balance between capturing sufficient distributional information and suppressing noise.

\section{Case Study: VT Interference in Generative Reasoning}
\label{app:case-study}

We present qualitative examples where ThinkMorph's generated VT images mislead subsequent reasoning, while \method avoids these failure modes. For each case we show: the input with question, the VT image generated by ThinkMorph, key reasoning excerpts from both models, and final answers.

\begin{figure*}[!ht]
\centering
\begin{minipage}[t]{0.60\textwidth}
    \vspace{0pt}
    \begin{minipage}[t]{0.24\textwidth}\centering
        \includegraphics[width=\textwidth]{}\\[-1pt]
        {\scriptsize (a) Input}
    \end{minipage}\hfill
    \begin{minipage}[t]{0.24\textwidth}\centering
        \includegraphics[width=\textwidth]{}\\[-1pt]
        {\scriptsize (b) Option A}
    \end{minipage}\hfill
    \begin{minipage}[t]{0.24\textwidth}\centering
        \includegraphics[width=\textwidth]{}\\[-1pt]
        {\scriptsize (c) Option B \textcolor{green!50!black}{\ding{51}}}
    \end{minipage}\hfill
    \begin{minipage}[t]{0.24\textwidth}\centering
        \includegraphics[width=\textwidth]{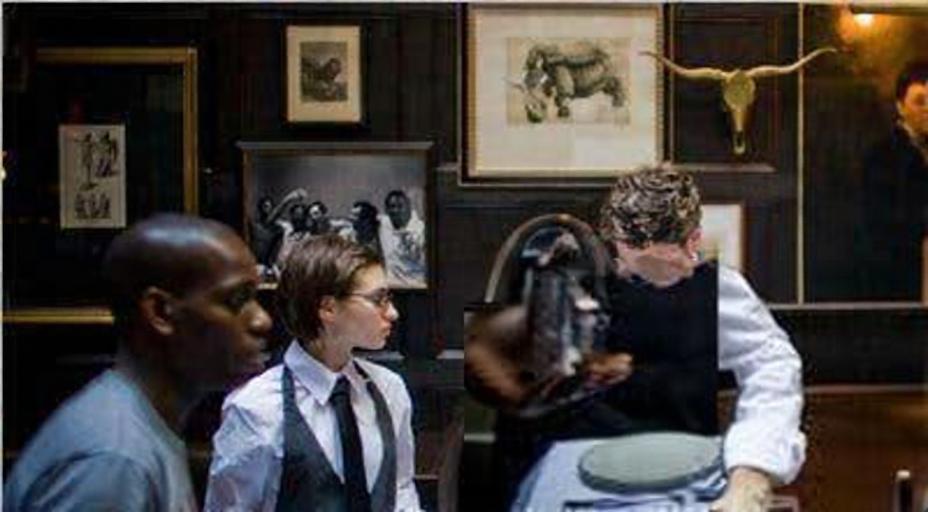}\\[-1pt]
        {\scriptsize (d) ThinkMorph VT}
    \end{minipage}
\end{minipage}\hfill
\begin{minipage}[t]{0.38\textwidth}
    \vspace{0pt}
    {\small\textbf{Question:} Which image is the missing part in the first image?}
    \vspace{4pt}

    {\small\textbf{Ground truth:} B (the third image).}
\end{minipage}

\vspace{6pt}
\hrule
\vspace{4pt}

\begin{minipage}[t]{0.48\textwidth}
    \colorbox{red!6}{\parbox{\dimexpr\textwidth-2\fboxsep}{%
    {\small\textbf{ThinkMorph} \textcolor{red}{\ding{55}} $\to$ \textbf{A}}\\[2pt]
    \scriptsize
    \textbf{Round\,0:} ``The arm and vest continue naturally from the first image\ldots the contours of the arm line up with the visible body.'' $\to$ commits to A.\\[2pt]
    \textbf{VT generation:} produces image (d) attempting to verify the completed puzzle.\\[2pt]
    \textbf{Round\,1} (conditioned on VT): ``The second image seamlessly fits into the missing lower right quadrant\ldots the white shirt and black vest connect smoothly.''\\[2pt]
    $\Rightarrow$ \textbf{VT reinforces initial error.}
    }}
\end{minipage}\hfill
\begin{minipage}[t]{0.48\textwidth}
    \colorbox{green!6}{\parbox{\dimexpr\textwidth-2\fboxsep}{%
    {\small\textbf{\method} \textcolor{green!50!black}{\ding{51}} $\to$ \textbf{B}}\\[2pt]
    \scriptsize
    ``The third image, with the blue apron and matching white shirt, fits perfectly into this scene: the apron's straps and the shirt's sleeve connect seamlessly with the upper body.''\\[4pt]
    ``If I were to use the second image, the table and objects would not fit as smoothly.''\\[4pt]
    $\Rightarrow$ \textbf{Directly compares both options against original input.}
    }}
\end{minipage}
\caption{\textbf{Case 1: Self-reinforcing VT confirmation.} The VT image~(d) is generated \emph{after} ThinkMorph has already committed to Option~A. The generated image visually reinforces the initial error, creating a self-reinforcing feedback loop. \method avoids this loop by reasoning directly from the original input.}
\label{fig:case1}
\end{figure*}

\begin{figure*}[!ht]
\centering
\begin{minipage}[t]{0.60\textwidth}
    \vspace{0pt}
    \begin{minipage}[t]{0.24\textwidth}\centering
        \includegraphics[width=\textwidth]{}\\[-1pt]
        {\scriptsize (a) Input}
    \end{minipage}\hfill
    \begin{minipage}[t]{0.24\textwidth}\centering
        \includegraphics[width=\textwidth]{}\\[-1pt]
        {\scriptsize (b) Option A}
    \end{minipage}\hfill
    \begin{minipage}[t]{0.24\textwidth}\centering
        \includegraphics[width=\textwidth]{}\\[-1pt]
        {\scriptsize (c) Option B \textcolor{green!50!black}{\ding{51}}}
    \end{minipage}\hfill
    \begin{minipage}[t]{0.24\textwidth}\centering
        \includegraphics[width=\textwidth]{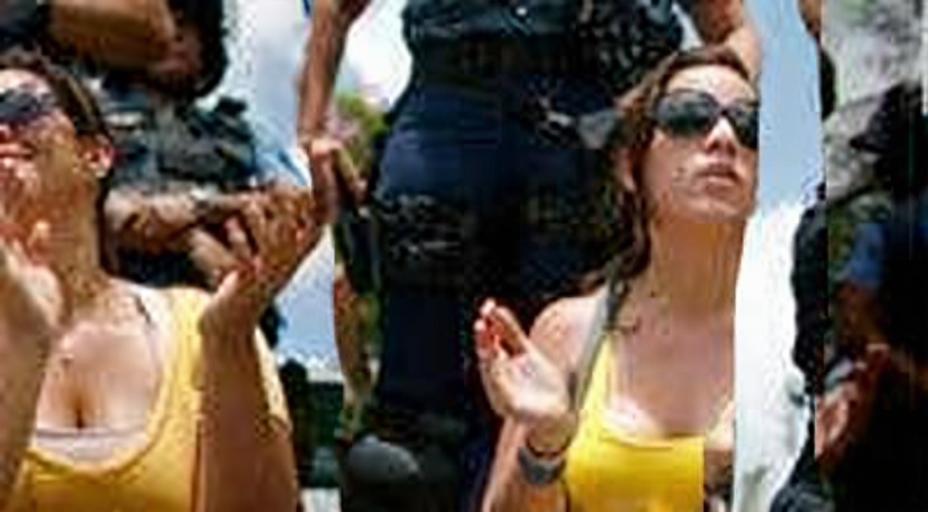}\\[-1pt]
        {\scriptsize (d) ThinkMorph VT}
    \end{minipage}
\end{minipage}\hfill
\begin{minipage}[t]{0.38\textwidth}
    \vspace{0pt}
    {\small\textbf{Question:} Which image is the missing part in the first image?}
    \vspace{4pt}

    {\small\textbf{Ground truth:} B (the third image).}
\end{minipage}

\vspace{6pt}
\hrule
\vspace{4pt}

\begin{minipage}[t]{0.48\textwidth}
    \colorbox{red!6}{\parbox{\dimexpr\textwidth-2\fboxsep}{%
    {\small\textbf{ThinkMorph} \textcolor{red}{\ding{55}} $\to$ \textbf{A}}\\[2pt]
    \scriptsize
    \textbf{Round\,0:} ``The left edge of this image aligns with the right edge of the first image, as both show the woman in the tank top at a matching angle.'' $\to$ commits to A.\\[2pt]
    \textbf{VT generation:} produces image (d) with visible blending artifacts at patch boundaries.\\[2pt]
    \textbf{Round\,1} (conditioned on VT): Confirms A despite spatial misalignment; pixel-level noise in (d) obscures the boundary discontinuities that would reveal incorrect placement.\\[2pt]
    $\Rightarrow$ \textbf{VT noise masks spatial errors.}
    }}
\end{minipage}\hfill
\begin{minipage}[t]{0.48\textwidth}
    \colorbox{green!6}{\parbox{\dimexpr\textwidth-2\fboxsep}{%
    {\small\textbf{\method} \textcolor{green!50!black}{\ding{51}} $\to$ \textbf{B}}\\[2pt]
    \scriptsize
    ``The third image contains the woman in yellow, the other person, and the officers in perfect alignment. The officers' arms, uniforms, and the sky all connect seamlessly.''\\[4pt]
    ``By focusing on the visual flow, especially the alignment of the woman's hands, the officers' arms, and the background, I see that the third image is the only one that fits perfectly.''\\[4pt]
    $\Rightarrow$ \textbf{Correct spatial continuity without noisy VT.}
    }}
\end{minipage}
\caption{\textbf{Case 2: Pixel-level VT noise obscures spatial cues.} The VT image~(d) introduces blending artifacts at patch boundaries, masking the visual discontinuities that would reveal incorrect placement. \method identifies correct spatial continuity directly from the original.}
\label{fig:case2}
\end{figure*}

\begin{figure*}[!ht]
\centering
\begin{minipage}[t]{0.60\textwidth}
    \vspace{0pt}
    \begin{minipage}[t]{0.45\textwidth}\centering
        \includegraphics[width=\textwidth]{}\\[-1pt]
        {\scriptsize (a) Input image}
    \end{minipage}\hfill
    \begin{minipage}[t]{0.45\textwidth}\centering
        \includegraphics[width=\textwidth]{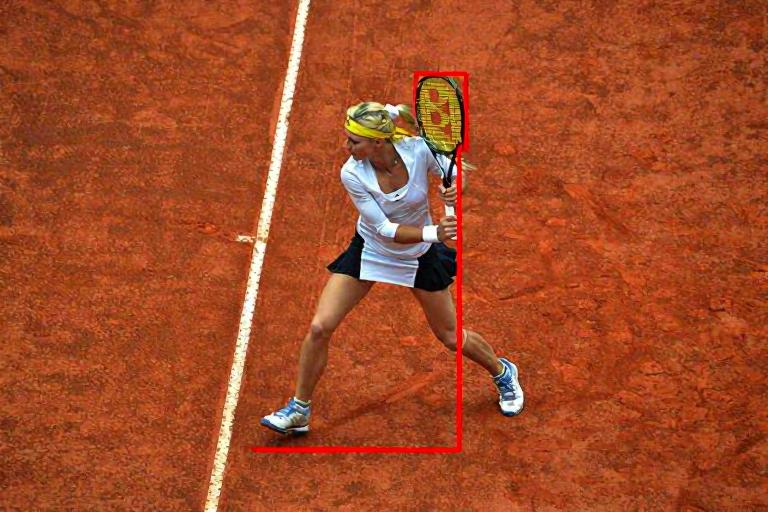}\\[-1pt]
        {\scriptsize (b) ThinkMorph VT}
    \end{minipage}\hfill
\end{minipage}\hfill
\begin{minipage}[t]{0.38\textwidth}
    \vspace{0pt}
    {\small\textbf{Question:} Is she about to swing the racket towards her left or right?}
    \vspace{4pt}

    {\small\textbf{Options:} (A)~right \quad (B)~left}
    \vspace{2pt}

    {\small\textbf{Ground truth:} A (right).}
\end{minipage}

\vspace{6pt}
\hrule
\vspace{4pt}

\begin{minipage}[t]{0.48\textwidth}
    \colorbox{red!6}{\parbox{\dimexpr\textwidth-2\fboxsep}{%
    {\small\textbf{ThinkMorph} \textcolor{red}{\ding{55}} $\to$ \textbf{B (left)}}\\[2pt]
    \scriptsize
    \textbf{Round\,0:} ``Her right arm is extended with the racket\ldots The racket is currently facing towards her right side.'' Despite noting rightward cues, concludes ``she is preparing to swing towards her left.''\\[2pt]
    \textbf{VT generation:} produces image (b) highlighting racket with bounding box, isolating it from full-body context.\\[2pt]
    \textbf{Round\,1} (conditioned on VT): ``The racket is pointed diagonally towards her left side, aligning with the direction of her body movement.'' Doubles down on wrong answer.\\[2pt]
    $\Rightarrow$ \textbf{VT annotation strips spatial context.}
    }}
\end{minipage}\hfill
\begin{minipage}[t]{0.48\textwidth}
    \colorbox{green!6}{\parbox{\dimexpr\textwidth-2\fboxsep}{%
    {\small\textbf{\method} \textcolor{green!50!black}{\ding{51}} $\to$ \textbf{A (right)}}\\[2pt]
    \scriptsize
    ``Her right arm is bent and gripping the racket\ldots The woman's head and torso are oriented towards the right side\ldots her right leg is extended as if preparing to move.''\\[4pt]
    $\Rightarrow$ \textbf{Integrates full-body posture holistically:} foot positioning, torso orientation, and shoulder angle jointly determine the correct swing direction.
    }}
\end{minipage}
\caption{\textbf{Case 3: VT annotation strips spatial context.} ThinkMorph's VT image~(b) highlights the racket via bounding box but isolates it from full-body context. Post-generation attention shifts to the annotated region (\texttt{txt1}$\to$\texttt{img1} dominance), causing reasoning from a spatially impoverished representation. \method integrates holistic body cues correctly.}
\label{fig:case3}
\end{figure*}

\begin{figure*}[!ht]
\centering
\begin{minipage}[t]{0.60\textwidth}
    \vspace{0pt}
    \begin{minipage}[t]{0.30\textwidth}\centering
        \includegraphics[width=\textwidth]{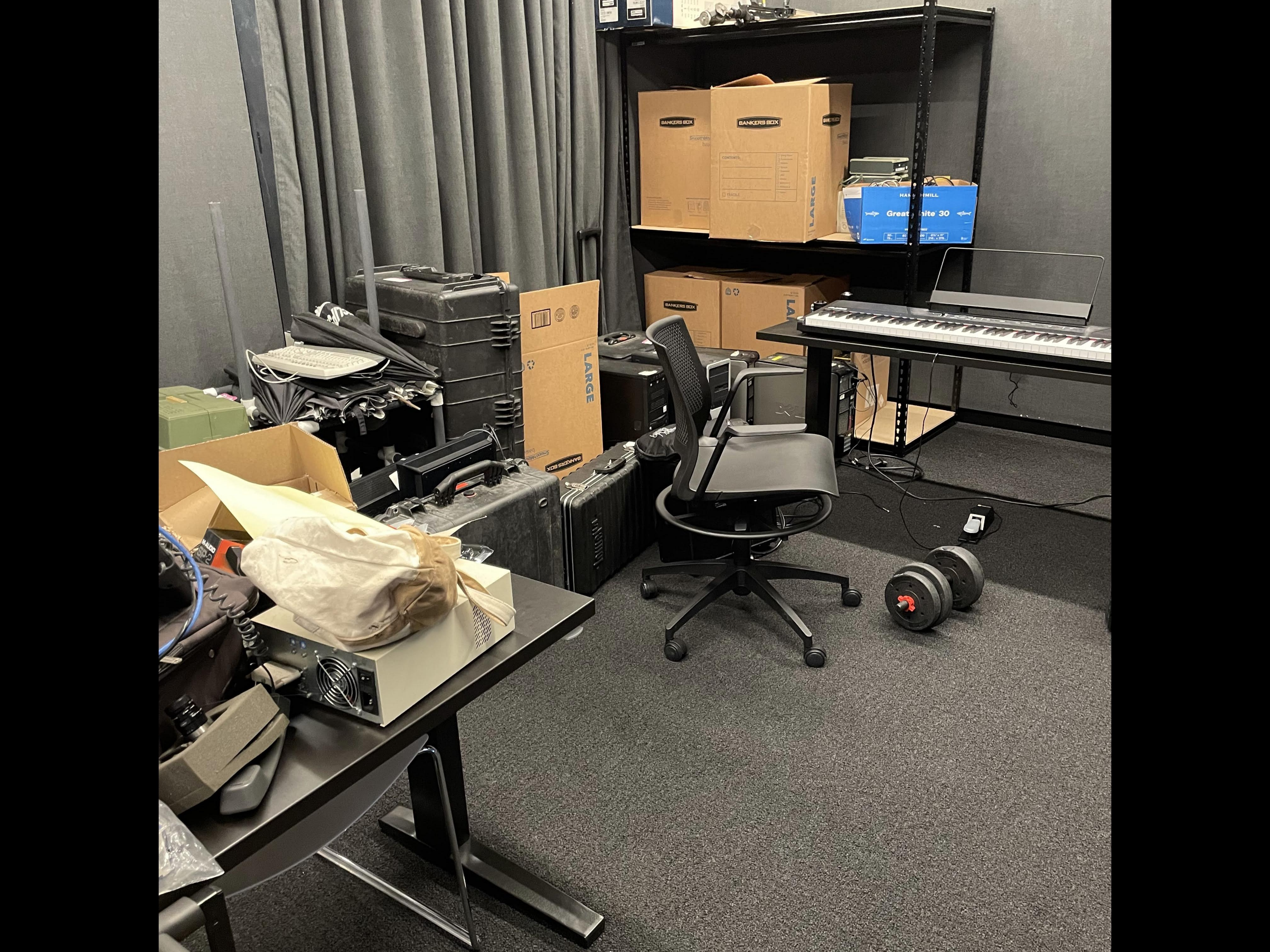}\\[-1pt]
        {\scriptsize (a) Frame 1}
    \end{minipage}\hfill
    \begin{minipage}[t]{0.30\textwidth}\centering
        \includegraphics[width=\textwidth]{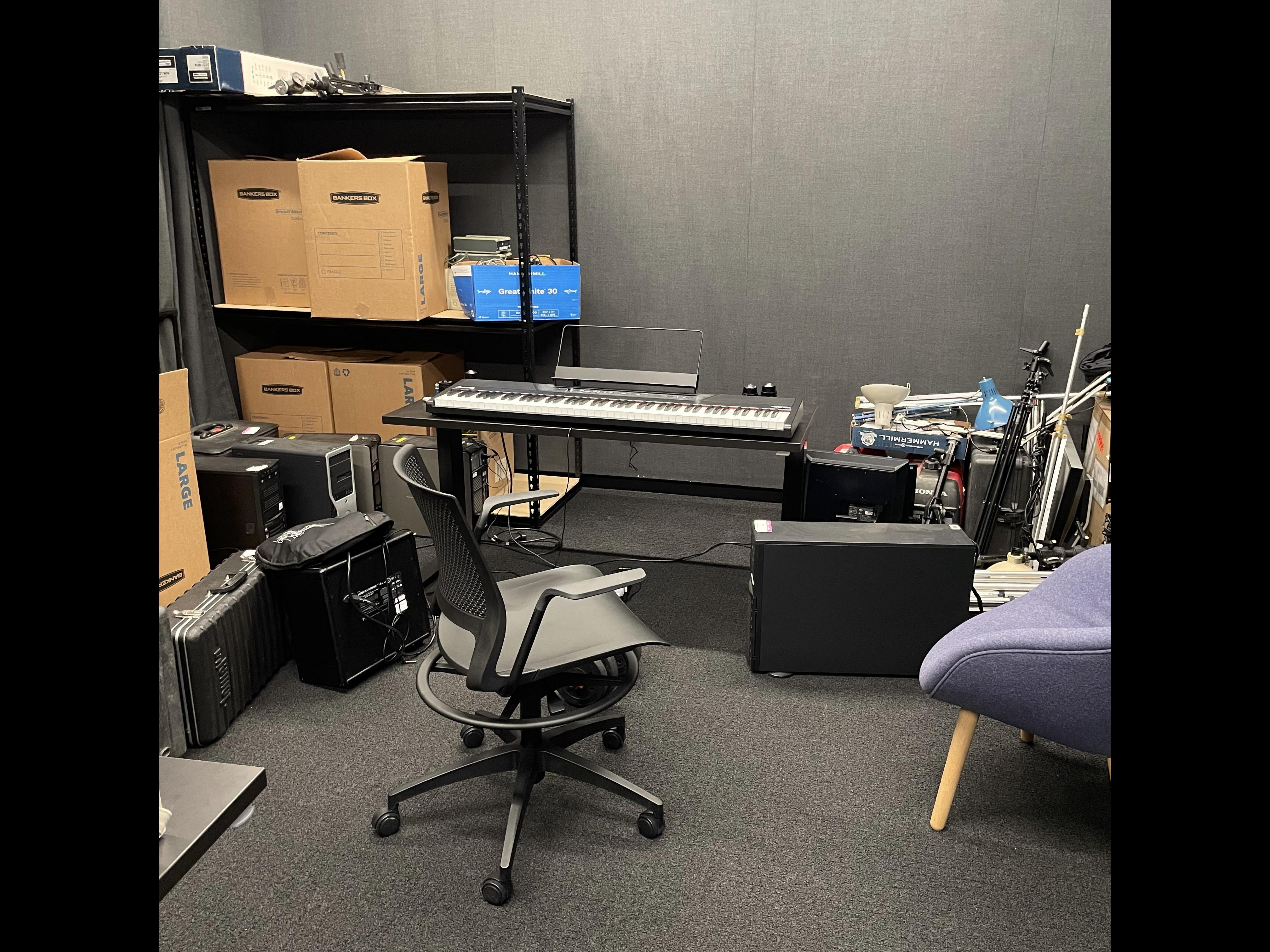}\\[-1pt]
        {\scriptsize (b) Frame 2}
    \end{minipage}\hfill
    \begin{minipage}[t]{0.30\textwidth}\centering
        \includegraphics[width=\textwidth]{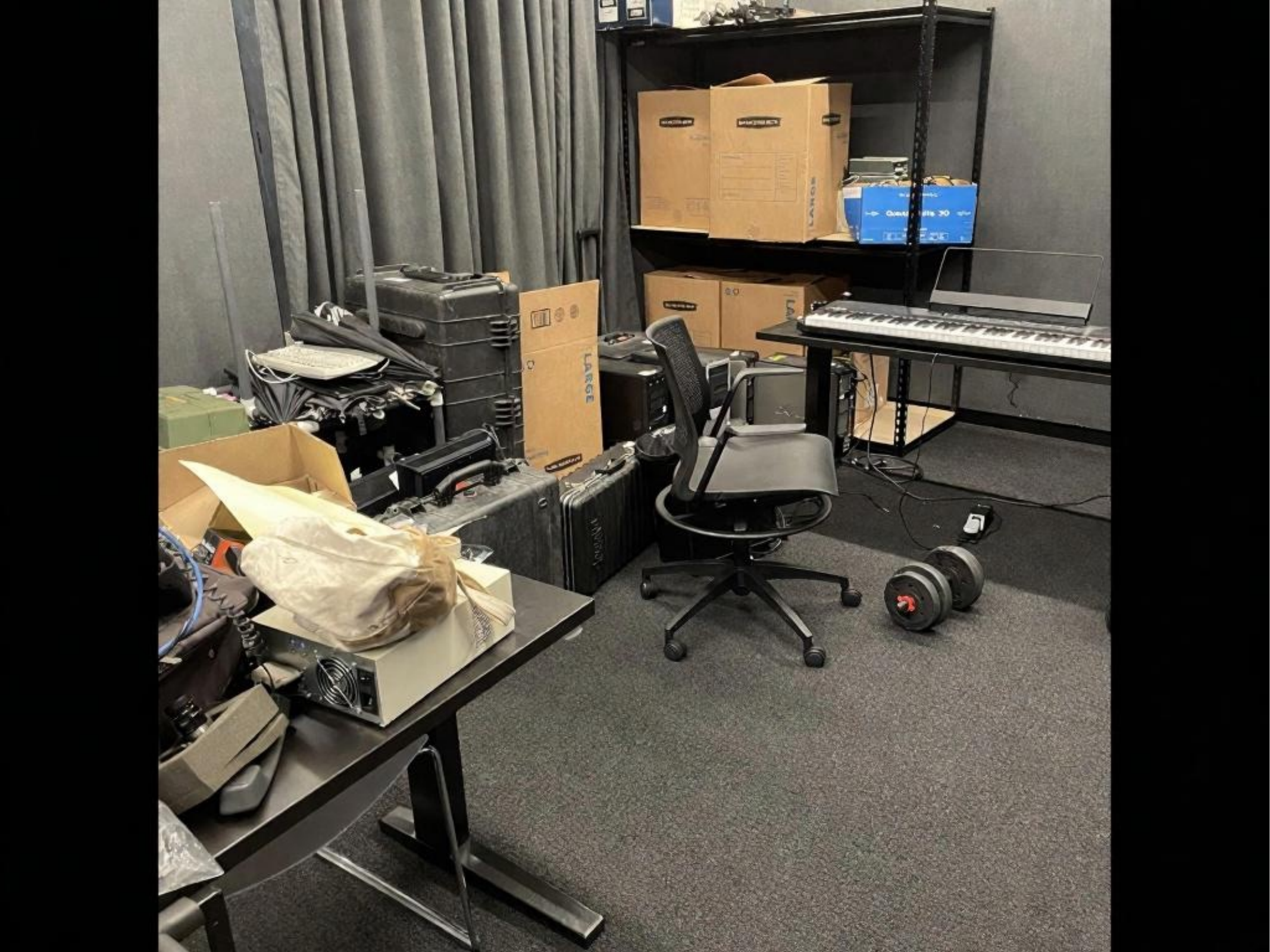}\\[-1pt]
        {\scriptsize (c) ThinkMorph VT}
    \end{minipage}
\end{minipage}\hfill
\begin{minipage}[t]{0.38\textwidth}
    \vspace{0pt}
    {\small\textbf{Question:} Were any objects moved from their original positions between frames?}
    \vspace{4pt}

    {\small\textbf{Options:}\\
    (A)~chair moved right \& towards camera\\
    (B)~chair moved left \& away from camera}
    \vspace{2pt}

    {\small\textbf{Ground truth:} A.}
\end{minipage}

\vspace{6pt}
\hrule
\vspace{4pt}

\begin{minipage}[t]{0.48\textwidth}
    \colorbox{red!6}{\parbox{\dimexpr\textwidth-2\fboxsep}{%
    {\small\textbf{ThinkMorph} \textcolor{red}{\ding{55}} $\to$ \textbf{B}}\\[2pt]
    \scriptsize
    \textbf{Round\,0:} Self-contradictory reasoning; first correctly observes ``moved left and slightly towards the camera,'' then states ``the answer is (A)\ldots \textit{However}\ldots the correct answer is (B).''\\[2pt]
    \textbf{VT generation:} produces image (c) with spatial distortions in chair position.\\[2pt]
    \textbf{Round\,1} (conditioned on VT): ``The chair is now more centrally placed\ldots its legs further from the camera\ldots the answer is (B).'' VT resolves internal ambiguity toward wrong direction.\\[2pt]
    $\Rightarrow$ \textbf{VT destabilizes uncertain reasoning.}
    }}
\end{minipage}\hfill
\begin{minipage}[t]{0.48\textwidth}
    \colorbox{green!6}{\parbox{\dimexpr\textwidth-2\fboxsep}{%
    {\small\textbf{\method} \textcolor{green!50!black}{\ding{51}} $\to$ \textbf{A}}\\[2pt]
    \scriptsize
    ``The chair, which was central in the first image, is now positioned slightly to the left and closer to the camera. Its backrest and seat are more directly facing the viewer, and the wheels are aligned with the carpet in a way that suggests it has been moved.''\\[4pt]
    $\Rightarrow$ \textbf{Consistent reasoning without self-contradiction.} No VT annotation to second-guess the correct initial spatial assessment.
    }}
\end{minipage}
\caption{\textbf{Case 4: VT-induced reasoning self-contradiction.} ThinkMorph's Round\,0 reasoning explicitly contradicts itself, and the VT image~(c) resolves this ambiguity toward the wrong answer. Without VT, \method reasons consistently and correctly. This illustrates how VT conditioning can \emph{actively degrade} reasoning quality.}
\label{fig:case4}
\end{figure*}

\vspace{0.5em}
\noindent\textbf{Summary.} The four cases above illustrate complementary VT interference mechanisms: (1)~self-reinforcing confirmation of initial errors (Case~1), (2)~pixel-level noise masking spatial discontinuities (Case~2), (3)~annotation-induced context stripping (Case~3), and (4)~reasoning destabilization through ambiguous VT cues (Case~4). In all cases, the \method student avoids these failure modes by reasoning directly from the original input, having internalized the generation pathway's spatial reasoning knowledge at the distribution level without inheriting its pixel-level limitations.

\clearpage
\section{Failure Cases: When High-Quality VT Help}
\label{app:failure-cases}

While \method outperforms ThinkMorph on complex spatial reasoning tasks where VT quality is low (Section~\ref{sec:analysis}), ThinkMorph retains an advantage on a subset of simpler benchmarks where its diffusion model produces high-quality VT images. Below we present representative failure cases from V* and MMVP. In these examples, the generated VT provide genuinely informative visual cues, e.g., magnified object regions or highlighted spatial details, that aid ThinkMorph's reasoning, whereas \method, lacking access to explicit VT images, misses these fine-grained cues despite having internalized the generation pathway's distributional knowledge.

\begin{figure*}[!ht]
\centering
\begin{minipage}[t]{0.60\textwidth}
    \vspace{0pt}
    \begin{minipage}[t]{0.45\textwidth}\centering
        \includegraphics[width=\textwidth]{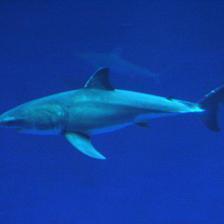}\\[-1pt]
        {\scriptsize (a) Input}
    \end{minipage}\hfill
    \begin{minipage}[t]{0.45\textwidth}\centering
        \includegraphics[width=\textwidth]{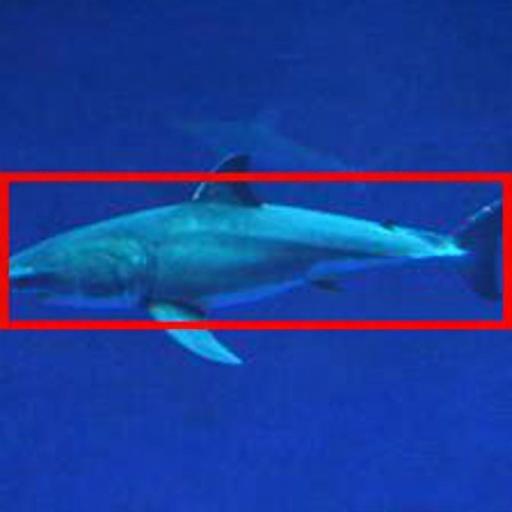}\\[-1pt]
        {\scriptsize (b) ThinkMorph VT}
    \end{minipage}\hfill
    \begin{minipage}[t]{0.28\textwidth}\centering
        \hfill
    \end{minipage}
\end{minipage}\hfill
\begin{minipage}[t]{0.38\textwidth}
    \vspace{0pt}
    {\small\textbf{Question:} Is the shark's belly visible in this image?}
    \vspace{4pt}

    {\small\textbf{Options:}\\
    (A)~Yes\\
    (B)~No}
    \vspace{2pt}

    {\small\textbf{Ground truth:} B (No).}
\end{minipage}

\vspace{6pt}
\hrule
\vspace{4pt}

\begin{minipage}[t]{0.48\textwidth}
    \colorbox{green!6}{\parbox{\dimexpr\textwidth-2\fboxsep}{%
    {\small\textbf{ThinkMorph} \textcolor{green!50!black}{\ding{51}} $\to$ \textbf{B (No)}}\\[2pt]
    \scriptsize
    \textbf{Round\,0:} ``The lower part of the shark, including its belly, is cut off by the edge of the frame\ldots there is no visible tissue continuing along the belly.''\\[2pt]
    \textbf{VT generation:} produces image~(b) with a bounding box around the shark's main body, explicitly delineating the region where the belly terminates at the frame boundary.\\[2pt]
    \textbf{Round\,1} (conditioned on VT): ``The area beneath the shark within this box does not show any continuation of its body\ldots the viewer is unable to see the shark's belly.'' Confirms B.\\[2pt]
    $\Rightarrow$ \textbf{VT bounding box clarifies the frame-cropping boundary.}
    }}
\end{minipage}\hfill
\begin{minipage}[t]{0.48\textwidth}
    \colorbox{red!6}{\parbox{\dimexpr\textwidth-2\fboxsep}{%
    {\small\textbf{\method} \textcolor{red}{\ding{55}} $\to$ \textbf{A (Yes)}}\\[2pt]
    \scriptsize
    ``The area beneath it is clearly visible\ldots the lighting and angle highlight the belly. The shark's fins, tail, and upper body are visible, but the belly is also distinctly seen, with no major visual blockages.''\\[4pt]
    $\Rightarrow$ \textbf{Without the VT bounding box, \method conflates the visible underside with the cropped belly region.}
    }}
\end{minipage}
\caption{\textbf{Failure Case 1 (MMVP): VT bounding box clarifies spatial occlusion.} The task requires determining whether the shark's belly is visible despite the image being cropped at the bottom. ThinkMorph's high-quality VT~(b) draws a bounding box delineating the visible body region, making the cropping boundary explicit and leading to the correct answer. \method, lacking this visual annotation, mistakes the partially visible underside for the full belly.}
\label{fig:fail1}
\end{figure*}

\begin{figure*}[!ht]
\centering
\begin{minipage}[t]{0.60\textwidth}
    \vspace{0pt}
    \begin{minipage}[t]{0.45\textwidth}\centering
        \includegraphics[width=\textwidth]{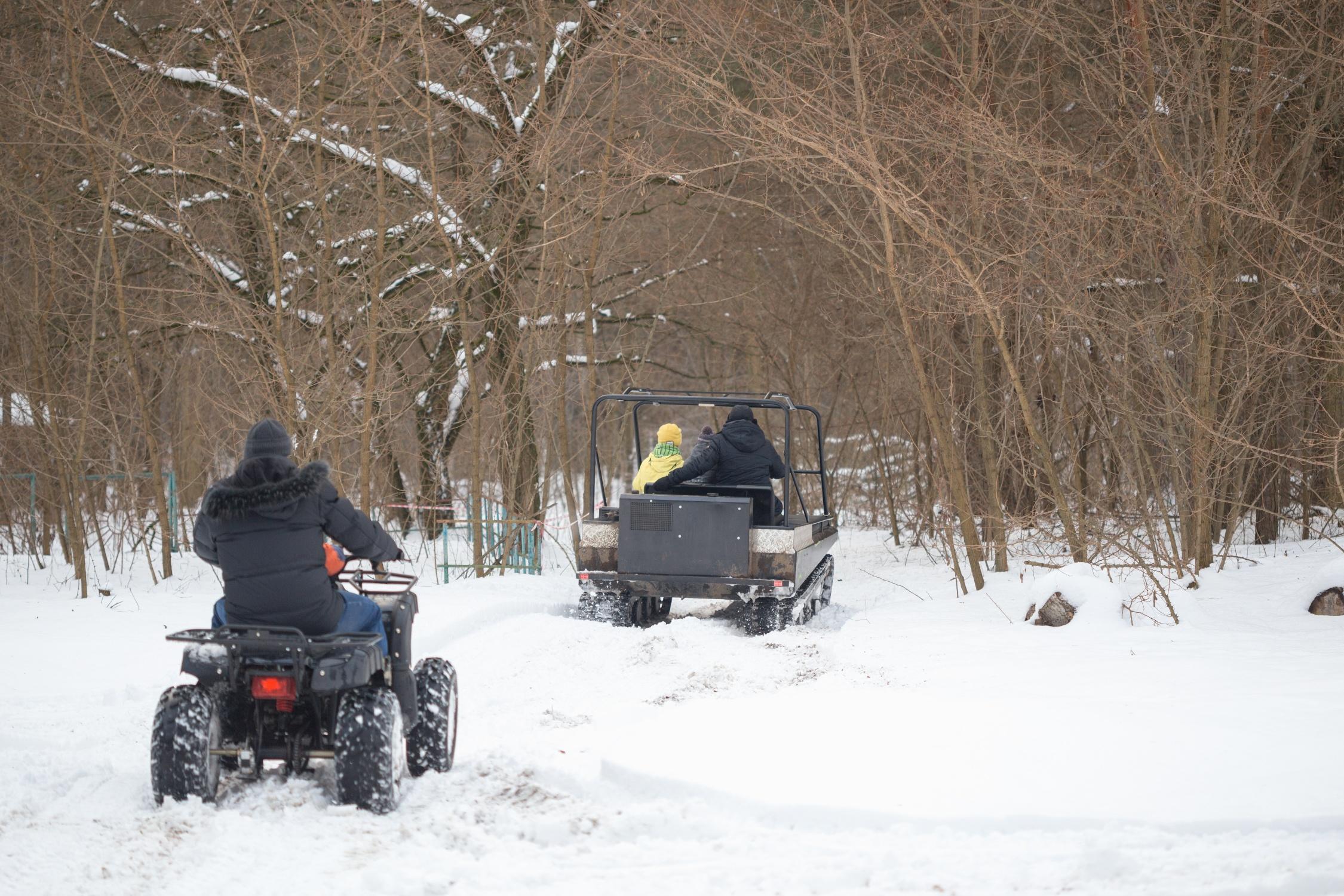}\\[-1pt]
        {\scriptsize (a) Input}
    \end{minipage}\hfill
    \begin{minipage}[t]{0.45\textwidth}\centering
        \includegraphics[width=\textwidth]{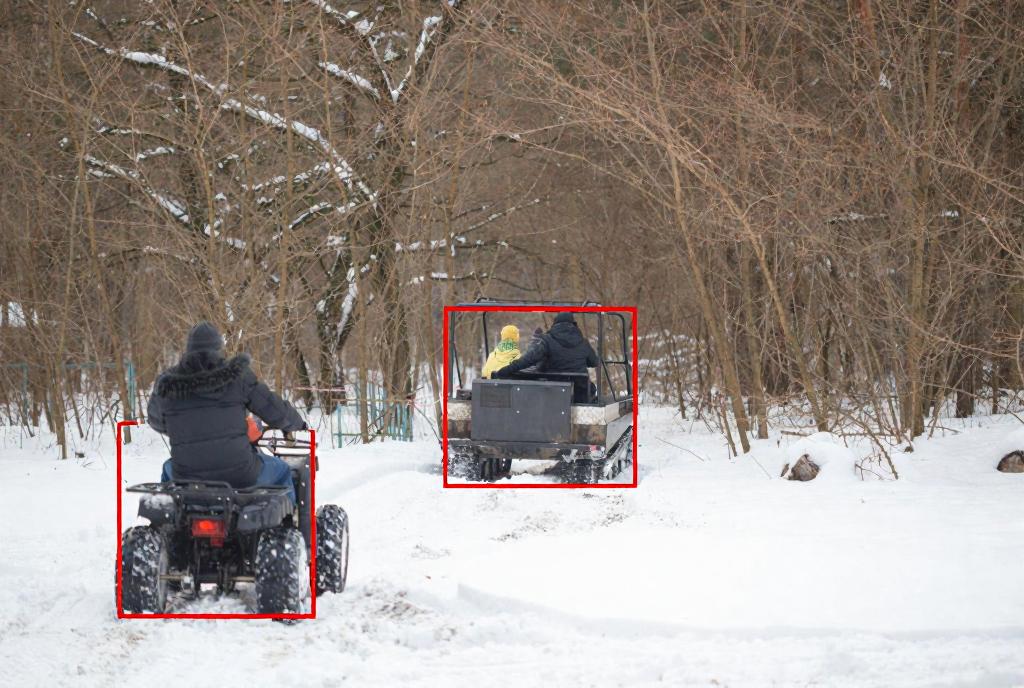}\\[-1pt]
        {\scriptsize (b) ThinkMorph VT}
    \end{minipage}\hfill
    \begin{minipage}[t]{0.28\textwidth}\centering
        \hfill
    \end{minipage}
\end{minipage}\hfill
\begin{minipage}[t]{0.38\textwidth}
    \vspace{0pt}
    {\small\textbf{Question:} What is the color of the scarf?}
    \vspace{4pt}

    {\small\textbf{Options:}\\
    (A)~red \quad (B)~green\\
    (C)~white \quad (D)~yellow}
    \vspace{2pt}

    {\small\textbf{Ground truth:} B (green).}
\end{minipage}

\vspace{6pt}
\hrule
\vspace{4pt}

\begin{minipage}[t]{0.48\textwidth}
    \colorbox{green!6}{\parbox{\dimexpr\textwidth-2\fboxsep}{%
    {\small\textbf{ThinkMorph} \textcolor{green!50!black}{\ding{51}} $\to$ \textbf{B (green)}}\\[2pt]
    \scriptsize
    \textbf{Round\,0:} ``This person is wearing a yellow jacket and a green scarf, which is visible around their neck.'' Correctly distinguishes scarf from jacket.\\[2pt]
    \textbf{VT generation:} produces image~(b) with bounding boxes around both the ATV rider and the lead vehicle passenger, isolating the person with the scarf.\\[2pt]
    \textbf{Round\,1} (conditioned on VT): ``The scarf stands out distinctly and is green in color, visible against the yellow jacket and the white snow.'' Confirms B.\\[2pt]
    $\Rightarrow$ \textbf{VT bounding boxes isolate the target person and scarf.}
    }}
\end{minipage}\hfill
\begin{minipage}[t]{0.48\textwidth}
    \colorbox{red!6}{\parbox{\dimexpr\textwidth-2\fboxsep}{%
    {\small\textbf{\method} \textcolor{red}{\ding{55}} $\to$ \textbf{D (yellow)}}\\[2pt]
    \scriptsize
    ``The person in the foreground, riding an ATV, is wearing a dark jacket and a scarf. The scarf is visible around the neck and is distinctly colored\ldots it is a bright yellow.''\\[4pt]
    $\Rightarrow$ \textbf{Without VT isolation, \method confuses the scarf's color with the adjacent yellow jacket.}
    }}
\end{minipage}
\caption{\textbf{Failure Case 2 (V*): VT bounding boxes disambiguate adjacent colors.} The scene contains a person in a yellow jacket with a green scarf, two colors in close proximity on a small, distant figure. ThinkMorph's VT~(b) draws bounding boxes around the relevant persons, enabling precise color discrimination between the jacket and scarf. \method, lacking this visual isolation, conflates the scarf's green with the jacket's yellow.}
\label{fig:fail2}
\end{figure*}

\vspace{0.5em}
\noindent\textbf{Discussion.} These failure cases reveal the boundary of distribution-level distillation. In both cases, the tasks are perceptually straightforward (requiring visibility judgment for Case~1 or fine-grained color discrimination for Case~2), and ThinkMorph's diffusion model produces high-quality VT annotations that genuinely aid reasoning. \method acquires the generation pathway's reasoning \emph{patterns} but cannot replicate the explicit visual operations (bounding-box delineation, region isolation) that VT provides at the pixel level. When VT quality is high, the benefit of filtering pixel-level noise is outweighed by the loss of these informative visual cues, resulting in the small net accuracy deficits observed on MMVP ($-$1.0pp) and V* ($-$2.1pp). Future work could explore hybrid strategies that selectively invoke VT generation when the expected quality is high and the task demands fine-grained perception.

\section{Evaluation Protocol}
\label{app:eval-protocol}

\paragraph{Internal models.} All internal models (BAGEL-7B, ThinkMorph, Text-only SFT, \methodnoise, \method) are evaluated using identical infrastructure: greedy decoding (temperature=0), max output tokens=1024, single H800 GPU, batch size 1. Images are preprocessed with NaViT-style patching (stride 14, max 980px, min 378px) identically across all systems. For ThinkMorph, VT generation uses 50 DDPM denoising steps with classifier-free guidance scale 3.5. Latency measurements include the full pipeline (image preprocessing, ViT encoding, LLM decoding, and VT generation where applicable) but exclude data loading I/O.

\paragraph{External VLMs.} InternVL3.5-8B/38B~\citep{wang2025internvl3} and Qwen3-VL-8B/32B~\citep{bai2025qwen3} are evaluated locally by the authors using official model weights and default inference configurations from their respective repositories. Janus-Pro-7B~\citep{chen2025janus} and Chameleon-7B~\citep{team2024chameleon} are similarly evaluated locally. All open models use greedy decoding with max 1024 output tokens and the same benchmark prompts as internal models. For V*~\citep{wu2024vstar}, we disable any external visual-search tools (e.g., the SEAL/V* visual search pipeline) for all evaluated VLMs, so every model is scored solely on its native single-pass multimodal reasoning capability, ensuring a fair comparison to our text-only \method student that has no tool access.

\paragraph{API models.} GPT-4o and GPT-5 results are obtained via the OpenAI API (May 2025 snapshot for GPT-4o, January 2026 for GPT-5) with temperature=0 and max\_tokens=1024. All API evaluations use the same prompt templates and image formats as internal evaluations.

\paragraph{Prompting.} Each benchmark uses a standardized prompt format: the input image(s) followed by the question text, with answer options listed when applicable. For multiple-choice benchmarks (VSP, BLINK, BLINK-J, MMVP, SAT, CV-Bench, V*), the prompt instructs the model to respond with the option letter. For open-ended benchmarks (ChartQA, VisPuzzle), the prompt requests a direct answer. 

\paragraph{\methodnoise details.} In \methodnoise, each real VT image in the teacher context is replaced with Gaussian noise $\mathcal{N}(0, 1)$ of the same spatial dimensions. The noise tensor is processed through the same ViT encoder as real VT images, producing tokens of the same shape and position encoding. All other elements of the teacher context (system prompt, problem image, question, reference intro, and transition prompt) are preserved unchanged. 

\end{document}